\documentclass[accepted]{uai2025} 
                        

\usepackage[american]{babel}

\usepackage{natbib} 
    \bibliographystyle{plainnat}
    
\usepackage{mathtools} 
\usepackage{booktabs} 
\usepackage{tikz} 

\usepackage{hyperref}
\usepackage{url}
\usepackage{hyperref}       
\usepackage{url}            
\usepackage{booktabs}       
\usepackage{amsfonts}       
\usepackage{nicefrac}       
\usepackage{microtype}      
\usepackage{xcolor}         
\usepackage{graphicx,psfrag,amsmath,amsfonts,verbatim}
\usepackage{hyperref}       
\usepackage{url}            
\usepackage{booktabs}       
\usepackage{amsfonts}       
\usepackage{nicefrac}       
\usepackage{microtype}      
\usepackage{xcolor}         
\usepackage{amsmath}
\usepackage{amsthm}
\usepackage{thmtools}
\usepackage{thm-restate}

\theoremstyle{definition}
\newtheorem{definition}{Definition}

\newtheorem{assumption}{Assumption}

\usepackage{subfigure}
\usepackage{caption}
\usepackage{algorithm}
\usepackage{algorithmic}
\usepackage{graphicx}
\usepackage{thm-restate}
\usepackage{caption}
\usepackage{booktabs}
\usepackage{wrapfig}
\usepackage{multirow}



\title{MutualNeRF: Improve the Performance of NeRF under Limited Samples with Mutual Information Theory}

%
%
\usepackage{authblk} 
\usepackage{footmisc}
\author[* 1 2]{Zifan Wang}
\author[* 1 2]{Jingwei Li}
\author[1]{Yitang Li}
\author[$\dagger$ 1 2]{Yunze Liu}

\affil[1]{%
    Institute for Interdisciplinary Information Sciences \\
    Tsinghua University
}
\affil[2]{%
    Shanghai Qizhi Institute
}
\affil[*]{Equal contribution}
\affil[$\dagger$]{Corresponding author}

  \begin{document}
\maketitle

\begin{abstract}
  This paper introduces MutualNeRF, a framework enhancing Neural Radiance Field (NeRF) performance under limited samples using Mutual Information Theory. While NeRF excels in 3D scene synthesis, challenges arise with limited data and existing methods that aim to introduce prior knowledge lack theoretical support in a unified framework. We introduce a simple but theoretically robust concept, Mutual Information, as a metric to uniformly measure the correlation between images, considering both macro (semantic) and micro (pixel) levels.
  For sparse view sampling, we strategically select additional viewpoints containing more non-overlapping scene information by minimizing mutual information without knowing ground truth images beforehand. Our framework employs a greedy algorithm, offering a near-optimal solution.
  For few-shot view synthesis, we maximize the mutual information between inferred images and ground truth, expecting inferred images to gain more relevant information from known images. This is achieved by incorporating efficient, plug-and-play regularization terms.
  Experiments under limited samples show consistent improvement over state-of-the-art baselines in different settings, affirming the efficacy of our framework.
\end{abstract}

\section{Introduction}
NeRF~\citep{mildenhall2020nerf} (Neural Radiance Fields) is an advanced technique in computer graphics and computer vision that enables highly detailed and photorealistic 3D reconstructions of scenes from 2D images~\citep{zhang2020nerf++, park2021nerfies, pumarola2021d}. It represents a scene as a 3D volume, where each point in the volume corresponds to a 3D location and is associated with a color and opacity. The key idea behind NeRF is to learn a deep neural network that can implicitly represent this volumetric function, allowing the synthesis of novel views
of the scene from arbitrary viewpoints.


 Although NeRF can synthesize high-quality images, it often relies on a large amount of high-quality training data~\citep{yu2021pixelnerf}. The performance of NeRF drastically decreases when the number of training data is reduced.  To mitigate this, existing strategies include adding new samples to the dataset and integrating regularization terms to introduce prior knowledge. 
For adding new samples, ActiveNeRF~\citep{pan2022activenerf}  aims to supplement the existing training set with newly captured samples based on an active learning scheme. It incorporates uncertainty estimation into a NeRF model and selects the samples that bring the most information gain. 
However, its reliance on the variance shift between prior and posterior distributions as a metric for information gain is somewhat speculative and can lead to unreliable outcomes.
Regarding regularization, a plethora of studies~\citep{niemeyer2022regnerf,yang2023freenerf,yu2021pixelnerf,jain2021putting} have explored the integration of prior or domain-specific knowledge to facilitate high-quality novel view synthesis and enhance generalization capabilities, even with limited training data.
However, many of these methods lack theoretical support, hindering their explanation and optimization within a unified framework.

\begin{figure}[t]
    \centering
    \includegraphics[width=0.95\linewidth]{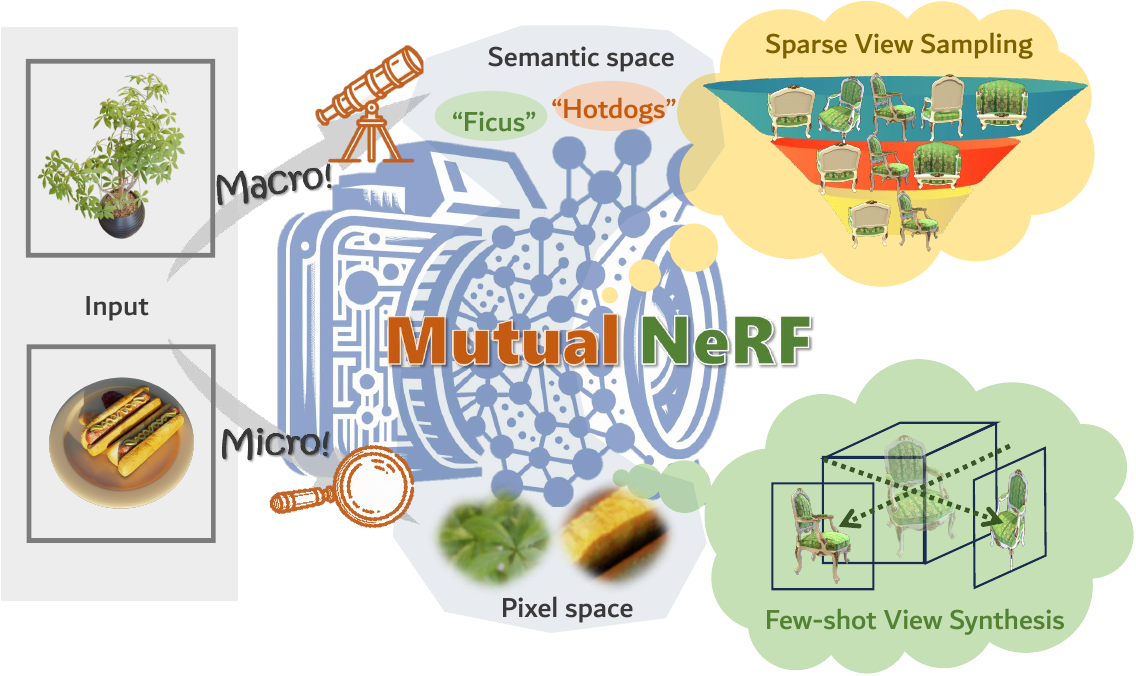}
    \vspace{-1mm}
    \caption{\textbf{The overview of MutualNeRF.} We introduce a novel and generic NeRF framework, comprehensively integrating mutual information from macro (semantic space) and micro perspectives (pixel space). This dual-perspective framework adeptly addresses challenges in sparse view sampling and few-shot view synthesis.}
    \label{fig:teaser}
    \vspace{-1em}
    \vspace{-3mm}
\end{figure}

Confronting challenges in the few-shot scenarios, we introduce a theoretically robust and computationally efficient strategy addressing two pivotal tasks: sparse view sampling and few-shot view synthesis. Sparse view sampling targets acquiring training images from a selection of candidate views without knowing their ground truth images. Our strategy intuitively emphasizes minimizing the correlation between training images for more unique information. Transitioning to few-shot view synthesis, our strategy involves training a NeRF model on a predetermined training set, aiming to maximize the correlation between inferred images and ground truth in the view synthesis process.

In this work, we introduce the concept of Mutual Information as an interpretable metric to model correlation. This concept is inspired by TupleInfoNCE~\citep{liu2021contrastive}, which effectively models mutual information across different modalities to enhance multi-modal fusion. Mutual information serves as a metric for quantifying the uncertainty between variables, especially pertinent in the NeRF context. On the one hand, it can guide us in selecting inputs to encapsulate maximal information with fewer images. On the other hand, it assesses the uncertainty of unknown view synthesis given known views. 
We approach mutual information from both macro and micro perspectives. The macro perspective focuses on the correlation in semantic features, particularly employing the CLIP~\citep{radford2021learning} method for semantic space distance, while the micro perspective in pixel space deals with the decomposition of relative information between images based on ray differences. To ensure feasibility and computational efficiency, pixel space distance is correlated with the Euclidean distance between camera positions and RGB color differences. Furthermore, we take into account multiple training set images for unknown scenes, introducing mutual information for multiple images.

Leveraging mutual information as the metric, our novel algorithmic framework can tackle board challenges in sparse view sampling by introducing new samples with less mutual information, and few-shot view synthesis by adding new regularization terms to increase the mutual information between the inferred images and the ground truth.

In sparse view sampling, the task is to select a subset of images from a candidate view set with unknown ground truth to supplement the training process. Ground truth is revealed only after selection, following the active learning framework.
Our strategy focuses on minimizing redundancy in the selected views to maximize information gain.
We introduce a computationally efficient greedy algorithm with a look-ahead strategy, which functions as a near-optimal solution. This algorithm iteratively selects images based on their contribution to unexplored information. The selection criteria combine semantic space distances, as derived from CLIP, with pixel space distance, calculated using the Euclidean distance between camera positions.

For few-shot view synthesis, the task is to directly train a NeRF model with limited and fixed training samples.
we aim to develop efficient, plug-and-play regularization terms for the training procedure.
The objective is to maximize the mutual information between training images and randomly rendered images, expecting inferred
images to gain more relevant information from known images. We assess semantic space distance by CLIP as the macro regularization term. As camera position is invariant to the parameter of the NeRF, we utilize a computationally efficient metric dependent on both camera positioning and network parameters.  
It serves as the micro regularization term and assesses pixel-wise distribution differences between known and unknown views.
 

Finally, we have experimentally validated our conclusions.
In sparse view sampling, following the ActiveNeRF protocol, we start with several initial images and supplement new viewpoints to evaluate the information gain brought by our sampling strategy. The experiments demonstrate that our strategy achieves the best performance with the introduction of the same number of new viewpoints.
For few-shot novel view synthesis, we compare our designed regularization terms with state-of-the-art baselines, showing consistent improvements across three datasets. An ablation study further analyzes the contribution of each term.
Remarkably, the mutual information metric, intuitive and straightforward yet theoretically robust, proves to efficiently guide the NeRF process at both input and output stages with simple quantitative computation in our framework.


\section{Related Work}



\paragraph{Mutual Information}
Mutual information is a basic concept in information theory and it has many applications in machine learning. \cite{oord2018representation} starts the research for unsupervised  representation learning
train feature extractors by maximizing an estimate of the mutual information (MI)
between different views of the data. This work has
been expanded in various directions, including the explanation of this principle~\citep{tschannen2019mutual}, the experiments improvement in more datasets~\citep{henaff2020data}, and the application of contrastive learning to the multiview setting~\citep{tian2020contrastive}. While their work primarily focuses on unsupervised learning tasks, we center on supervised learning with sparse samples. However, the concept of leveraging information from unlabeled data is also adopted in our approach.

\paragraph{Active Learning} Active learning~\citep{settles2009active} allows a learning algorithm to actively query a user or information source for labeling new data points. It has been widely applied in computer vision tasks~\cite{yi2016scalable,sener2017active,fu2018scalable,zolfaghari2019temporal}. ActiveNeRF~\citep{pan2022activenerf} was the first to integrate active learning into NeRF optimization. We adopt it for sparse view sampling. Unlike ActiveNeRF, which focuses on uncertainty reduction, our approach explores mutual information from both macro and micro perspectives.

\paragraph{Few-shot Novel View Synthesis} 
NeRF~\citep{mildenhall2020nerf} has become one of the most important methods for synthesizing new viewpoints in 3D scenes~\citep{xiangli2021citynerf,fridovich2022plenoxels,takikawa2021neural,yu2021plenoctrees,tancik2022block,hedman2021baking}. A growing number of recent works have studied few-shot novel view
synthesis via NeRF~\citep{wang2021nerf,martin2021nerf,meng2021gnerf,kim2022infonerf,deng2022nerdi, wang2023sparsenerf}. First, diffusion-model-based methods use generative inference as supplementary information. 
SparseFusion~\citep{zhou2022sparsefusion} distills  a
3D consistent scene representation from a view-conditioned latent diffusion model. 
Second, some methods additionally extrapolate the scene’s geometry and appearance to a new viewpoint. DietNeRF~\citep{jain2021putting} introduces semantic consistency loss between observed and unseen views.
Third, some methods use regularization to mitigate overfitting and incorporate prior knowledge. RegNeRF~\citep{niemeyer2022regnerf} regularizes geometry and appearance from unobserved viewpoints, while FreeNeRF~\citep{yang2023freenerf} constrains the input frequency range. However, the lack of a unified theoretical foundation hinders comprehensive explanation and optimization. We aim to propose a generic framework with interpretable metrics to address this gap.

\section{Setup}
First, we briefly overview the Neural Radiance Fields (NeRF) framework with key implementation details.
NeRF models the 3D scene as a continuous function $F_{\theta}$, which is discerned through a multi-layer perceptron (MLP). 

Specifically, given  a spatial coordinate  \( \mathbf{x} \in R^3 \) in the scene, and  a specific observation direction \( \mathbf{d} \in R^2 \), NeRF is capable of inferring the corresponding RGB color \( c \) and a discrete volume density \( \sigma \):
\vspace{-1mm}
\begin{align*}
    F_\theta : (\mathbf{x}, \mathbf{d}) \mapsto (c, \sigma).
\end{align*}

NeRF models are trained based on a classic differentiable volume rendering operation, which establishes the resulting color of any ray passing through the scene volume and projected onto a camera system. Each ray $\mathbf{r}(t) = \mathbf{o} + t\mathbf{d}$ with $t \in \mathbb{R}^+$, determined by the position of camera $\mathbf{o} \in R^3$ and the direction of ray $\mathbf{d}$. 
Note that for each $t$, $\mathbf{r}(t)$ represents a
position in $R^3$. The value of $\sigma$ defines the geometry of the scene and is learned exclusively from this position. However, the value of $\mathbf{c}$ is also dependent on the viewing direction $\mathbf{d}$. Therefore, we have the volume rendering equation as follows to represent the color on the ray $C(\mathbf{r})$:
\begin{align*}
    C(\mathbf{r}) &= \int_{t_n}^{t_f} T(t) \sigma(\mathbf{r}(t))c(\mathbf{r}(t), \mathbf{d}) dt \,, \\
    T(t) &= \exp\left( -\int_{t_n}^{t} \sigma(\mathbf{r}(s)) ds \right) \,.
\end{align*}

Given some images with observing direction $\mathbf{d}$ and camera position $\mathbf{o}$, we can get the ground truth color $C(\mathbf{r})$ on the ray. To estimate it, we can use the NeRF and volume rendering equation to calculate $\hat{C}(\mathbf{r})$. 
To bypass the challenge of computing the continuous integral, it is common to employ a discretization method: randomly sample $N$ time $\{t_1, t_2, \ldots, t_N \}$ and get the position $\{\mathbf{x}_1, \mathbf{x}_2, \ldots, \mathbf{x}_N\}$ on the ray with $\mathbf{x}_i = \mathbf{o} + t_i\mathbf{d}$. 
Then we can estimate the color by the following equation, where we denote the sampling interval $\delta_i = t_{i+1} - t_i$ and the constant $\alpha_i = 1 - \exp(-\sigma_i\delta_i)$:
\begin{align*}
    \hat{C}(\mathbf{r}) = \sum_{i=1}^N T_i \alpha_i \mathbf{c}_i, \quad T_i = \exp(-\sum^{i-1}_{j=1} \sigma_j\delta_j)  \,.
\end{align*}
Following this volume rendering logic, the NeRF function \( F \) is optimized by minimizing the squared error between the estimated color and the real colors of a batch of rays \( \mathcal{R} \) that project onto a set of training views of the scene taken from different viewpoints:
\begin{align*}
    L_{\text{NeRF}} = \sum_{\mathbf{r} \in \mathcal{R}} \left\| \hat{C}(\mathbf{r}) - C(\mathbf{r}) \right\|^2 \,.
\end{align*}
While NeRF achieves outstanding results in view synthesis, it traditionally demands a substantial collection of densely captured, camera-calibrated images. Addressing the difficulties of such extensive data collection, we will introduce a more efficient framework in the next section.

\section{Framework}
\label{framework_label}

In this section, we outline our principal framework for the algorithm's design. As we need to choose training images instead of rays, we denote $\mathcal{R}$ as the set of images in this section. Given the limited number of training samples, it's essential to select a sparse but information-rich subset, $\mathcal{R}_s \subset \mathcal{R}$, to capture various details of scenes and generalization well in other views of the scenes or object. Therefore, to establish a criterion for assessing the adequacy of an image in capturing scene information, we draw upon principles from information theory to devise an appropriate metric.

\vspace{-12mm}
\begin{center}
\begin{figure*}[!t]
  \centering
  
\includegraphics[width=1.8\columnwidth]{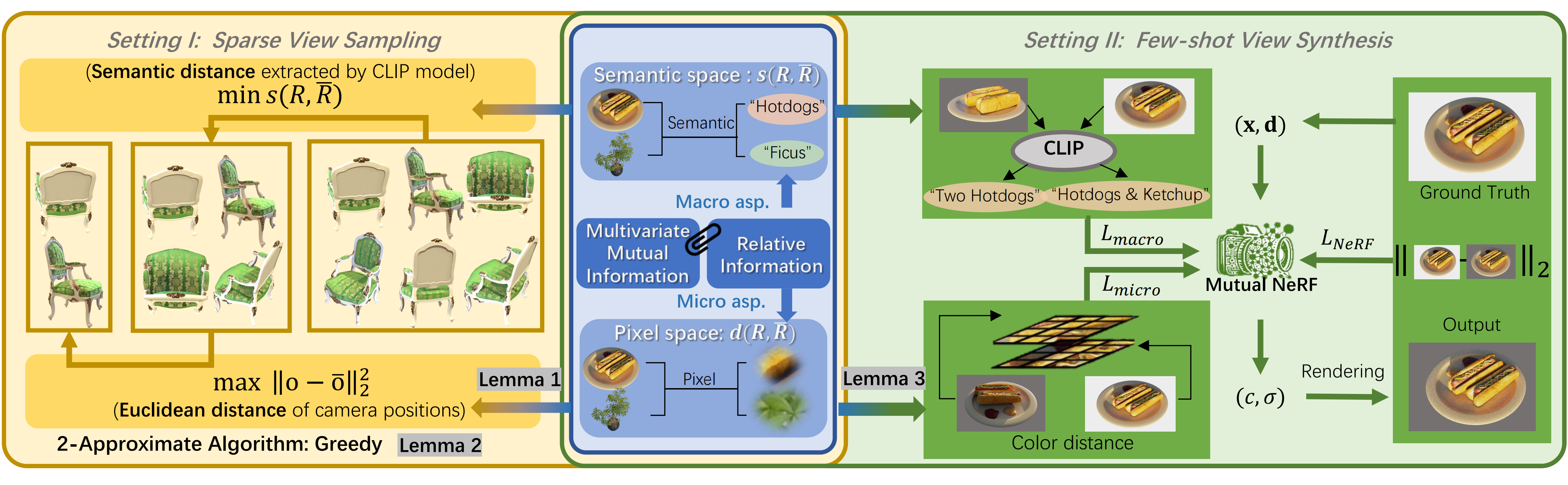}
  \caption{\textbf{The overview of our framework. } First, we leverage mutual information and relative information to quantify the uncertainty in inferring unknown images conditioned on known ones. This involves decomposing the uncertainty into semantic space distance (macro) and pixel space distance (micro). These distances are converted into specific types tailored for quantifying mutual information in different scenarios. In sparse view sampling, a greedy algorithm minimizes mutual information via a near-optimal solution. We use Euclidean distance of camera positions to represent pixel space distance and propose a sequential method that prioritizes either semantics or pixels. For few-shot view synthesis, we use color distance to represent pixel space distance and maximize mutual information as efficient plug-and-play regularization terms.}
  \vspace{-4mm}
\label{fig:overview}
\end{figure*}   
\end{center}


In information theory, mutual information quantifies the reduction in uncertainty of one variable given the knowledge of another. This concept aligns with our objectives in the context of NeRF. Specifically, we utilize the information from a known image, $R$, which includes a subset of views, to infer properties about an unknown image, $\overline{R}$. 

\begin{definition}[Mutual Information]
\label{def:mutual}
Mutual information measures dependencies between random variables. Given two random variables \( R \) and \( \overline{R} \), it can be understood as how much knowing \( R \) reduces the uncertainty in \( \overline{R} \) or vice versa. Formally, the mutual information between \( R \) and \( \overline{R} \) is:
\begin{align*}
    I(R, \overline{R}) = H(R) - H(R | \overline{R}) = H(\overline{R}) - H(\overline{R} | R).
\end{align*}

\vspace{-3mm} where $H(R)$ represents the information of the random variables \( R \), $H(R|\overline{R})$ represents the relative uncertainty to infer $R$ if we know $\overline{R}$.

\end{definition}

\vspace{-3mm} In the context of NeRF, $H(R)$ represents the information of the image \( R \), and $H(R|\overline{R})$ represents the relative uncertainty to infer unknown $R$ based on known image $\overline{R}$. Our objective is to quantify the mutual information $I(R,\overline{R})$ and deduce information about one image from another to a certain degree. Assuming symmetry among all images and an equal number of rays, we reasonably hypothesize that the inherent information content of each image $H(R)$ is equal. Consequently, we aim to maximize the conditional information $H(R | \overline{R})$ and $H(\overline{R}|R)$.

We then adopt both macro and micro perspectives to describe the conditional information $H(R|\overline{R})$. 

From the macro perspective, the semantic features of the entire image serve as indicators of the uncertainty in the relative information. To gauge the similarity between two images, we consider employing the CLIP method, as proposed by \cite{radford2021learning} to extract semantic features.

\begin{definition}[Semantic Space Distance]
    Suppose we have a clip function  $f$, we define the semantic space distance between images $R$ and $\overline{R}$ as the 1 - cosine similarity:
    \begin{align*}
        s(R, \overline{R}) = 1 - \frac{f(R)f(\overline{R})}{\|f(R)\|\|f(\overline{R})\|} \,.
    \end{align*}
\end{definition}

From the micro perspective, we know that we can decompose the relative information between images into the relative difference of the rays. Suppose the rays in the two images can be described as $\mathbf{r}(t) = \mathbf{o} + t\mathbf{d}$ and $\mathbf{\overline{r}}(t) = \mathbf{\overline{o}} + t\mathbf{\overline{d}}$. The direction can be represent as $\mathbf{d}:(\theta_1, \phi_1)$ and $\mathbf{\overline{d}:(\theta_2, \phi_2)}$, $\theta_1, \theta_2 \in U(\theta,\overline{\theta})$ and $\phi_1, \phi_2 \in U(0,2\pi)$ are sampled from uniform distribution where $\theta$ and $\overline{\theta}$ are fixed parameter. We assume the distance moving in direction $\mathbf{d}$ of two rays are $T_1$ and $T_2$. Then we define the distance between two rays as the combination of Euclidean distance in expectation between the combination of points in these two rays:

\begin{definition}[Pixel Space Distance]
    We define the distance between images in pixel space as the distance between any two points of rays in these images in expectation: 
    \begin{align*}
        d(R, \overline{R}) = E_{\mathbf{r} \in R, \overline{\mathbf{r}}  \in \overline{R}} \left [\int_{ 0}^{T_1} \int_{0}^{T_2} \|\mathbf{r}(t_1)-\mathbf{\overline{r}}(t_2)\|^2_2 dt_2 dt_1 \right] \,.
    \end{align*}
\end{definition}

Note that measuring the distance between images aligns with measuring the distance between camera positions $\|\mathbf{o}-\mathbf{\overline{o}}\|^2_2$ corresponding to these images by the following lemma:

\begin{restatable}{lemma}{camera}
\label{lem:camera}
Then the distance between two images can be represented by the Euclidean distance of two positions of cameras, $\|\mathbf{o}-\mathbf{\overline{o}}\|^2_2$, by the following equation:
    \begin{align*}
        d(R, \overline{R}) = T_1 T_2 \|\mathbf{o}-\overline{\mathbf{o}}\|^2_2 + C \,,
    \end{align*}
\end{restatable}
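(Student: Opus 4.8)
The plan is to reduce the double integral to an elementary polynomial in $t_1,t_2$ by expanding the squared Euclidean norm, integrate term by term, and then use the symmetry of the direction sampling to discard everything that is not proportional to $\|\mathbf{o}-\overline{\mathbf{o}}\|_2^2$. First I would write $\mathbf{r}(t_1)-\overline{\mathbf{r}}(t_2)=(\mathbf{o}-\overline{\mathbf{o}})+t_1\mathbf{d}-t_2\overline{\mathbf{d}}$ and expand
\begin{align*}
\|\mathbf{r}(t_1)-\overline{\mathbf{r}}(t_2)\|_2^2 &= \|\mathbf{o}-\overline{\mathbf{o}}\|_2^2 + t_1^2\|\mathbf{d}\|_2^2 + t_2^2\|\overline{\mathbf{d}}\|_2^2 \\
&\quad + 2t_1\langle \mathbf{o}-\overline{\mathbf{o}},\mathbf{d}\rangle - 2t_2\langle \mathbf{o}-\overline{\mathbf{o}},\overline{\mathbf{d}}\rangle - 2t_1t_2\langle \mathbf{d},\overline{\mathbf{d}}\rangle .
\end{align*}
Integrating over $t_1\in[0,T_1]$ and $t_2\in[0,T_2]$ is routine bookkeeping: the constant term gives $T_1T_2\|\mathbf{o}-\overline{\mathbf{o}}\|_2^2$, the two $t_i^2$ terms give $\tfrac13 T_1^3T_2$ and $\tfrac13 T_1T_2^3$, and the three mixed terms give multiples of $T_1^2T_2$, $T_1T_2^2$, and $T_1^2T_2^2$ times the corresponding inner products.

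The decisive step is taking the expectation over the random directions $\mathbf{d},\overline{\mathbf{d}}$. Since these are unit vectors, $\|\mathbf{d}\|_2^2=\|\overline{\mathbf{d}}\|_2^2=1$, so the two $t_i^2$ contributions are already constants independent of the camera positions. The azimuthal angle is drawn uniformly from $[0,2\pi)$, forcing $E[\cos\phi]=E[\sin\phi]=0$, which together with symmetric sampling of the polar angle gives $E[\mathbf{d}]=E[\overline{\mathbf{d}}]=\mathbf{0}$. Hence the two terms linear in the directions vanish in expectation, $E[\langle \mathbf{o}-\overline{\mathbf{o}},\mathbf{d}\rangle]=\langle \mathbf{o}-\overline{\mathbf{o}},E[\mathbf{d}]\rangle=0$, which is exactly what erases the remaining camera-position dependence beyond $\|\mathbf{o}-\overline{\mathbf{o}}\|_2^2$. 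By independence of $\mathbf{d}$ and $\overline{\mathbf{d}}$, the cross term satisfies $E[\langle \mathbf{d},\overline{\mathbf{d}}\rangle]=\langle E[\mathbf{d}],E[\overline{\mathbf{d}}]\rangle$, a constant, so the $t_1t_2$ contribution is absorbed into $C$ as well. Collecting the survivors yields $d(R,\overline{R})=T_1T_2\|\mathbf{o}-\overline{\mathbf{o}}\|_2^2+C$ with $C=\tfrac13 T_1^3T_2+\tfrac13 T_1T_2^3-\tfrac12 T_1^2T_2^2\langle E[\mathbf{d}],E[\overline{\mathbf{d}}]\rangle$ depending only on $T_1,T_2$ and the fixed direction distribution, not on $\mathbf{o},\overline{\mathbf{o}}$.

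I expect the main obstacle to be justifying rigorously that the linear cross terms vanish, i.e.\ that $E[\mathbf{d}]=\mathbf{0}$ under the stated sampling of $(\theta,\phi)$. Azimuthal uniformity kills the transverse components immediately, but the component along the polar axis survives unless the polar angle is sampled symmetrically about $\pi/2$; I would therefore make the symmetry assumption on the $\theta$-range explicit, or alternatively argue that both images share the same direction distribution so that any residual mean direction contributes the same position-independent quantity and still collapses into $C$. Once this symmetry point is pinned down, the remainder is the elementary evaluation of the $t_1,t_2$ integrals already sketched above.
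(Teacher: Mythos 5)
Your proof is correct and follows essentially the same route as the paper's: expand the squared norm, integrate the elementary polynomial in $t_1,t_2$, and eliminate the cross terms via the symmetry of the direction distribution. One caveat: your fallback claim that a nonzero residual mean direction ``still collapses into $C$'' does not actually work, since that contribution equals $\langle\mathbf{o}-\overline{\mathbf{o}},E[\mathbf{d}]\rangle\,(T_1^2T_2-T_1T_2^2)$ and is therefore position-dependent unless $T_1=T_2$ --- the paper's own proof leans on the same unstated symmetry of the $t_1,t_2$ integration, so your primary instinct to make the assumption $E[\mathbf{d}]=\mathbf{0}$ explicit is the right fix.
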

where $C$ is a constant independent of $\mathbf{o}$ and $\mathbf{\overline{o}}$.
Therefore, we use the measure $d(R, \overline{R})$ and $s(R, \overline{R})$ to represent the relative information $H(R| \overline{R})$ in the following assumption:

\begin{assumption}
\label{ass:inverse}
    We assume the relative information of two images $H(R| \overline{R})$ is proportional to the similarity measure and distance measure between two images, that is, 
\[
    H(R| \overline{R}) \propto s(R, \overline{R}), \quad H(R| \overline{R}) \propto d(R, \overline{R}) \,.
\]
\end{assumption}
Note that when we are predicting the information of an uncaptured image $\overline{R}$, we are not limited to using information from a single image in the training set. Rather, we can harness the collective information from multiple images, denoted $R_1, R_2, \ldots R_m$. it becomes necessary to extend the definition of mutual information to encompass multiple variables, capturing the interdependencies among more than two variables. Drawing on insights from \citep{williams2010nonnegative}, we understand that the mutual information across multiple images can be broken down into the maximal mutual information observed between any two images. The formulation is as detailed below:

\begin{definition}[Mutual Information for multiple images]
\label{def:multi}
    Suppose we have several images $R_1, R_2, \ldots R_m$ in the training set. Then we want to infer the information of an unknown image $\overline{R}$, the mutual information of this image corresponding to other images is defined as:
    \begin{align*}
        I(R_1,R_2,\ldots R_m; \overline{R}) = \max_{i=1,2,\ldots m} I(R_i, \overline{R})\,.
    \end{align*}
\end{definition}
After presenting the framework, we will illustrate our algorithm's efficacy in addressing two critical tasks which detailed in the subsequent sections: sparse view sampling and few-shot view synthesis.
\section{Sparse View Sampling}
\label{Section_view_sampling}

Sparse view sampling, proposed by ActiveNeRF~\citep{pan2022activenerf}, is an active learning scheme designed to enhance the quality of NeRF by strategically selecting additional viewpoints. In this setting, we begin with a limited number of training images, and a candidate set of viewpoints for which we \textbf{do not possess the corresponding ground truth images}. \textbf{It is only after a viewpoint is selected that we acquire its ground truth image}, subsequently transferring it from the candidate to the training set.
By analyzing the shortcomings of initial images, we strategically select additional viewpoints and then get the corresponding images to improve the NeRF model's synthesis quality. 
For instance, if constrained to capture only three images of the Eiffel Tower, we are presented with various potential viewpoints from the sky or ground. Sparse view sampling involves selecting the most informative viewpoints based on the initial images.




Our framework selects an informative subset of views by \textbf{minimizing} \textbf{mutual information} without knowing the ground truth images beforehand. It stems from the observation that lower mutual information reflects reduced redundancy between views. For example, highly similar images exhibit high mutual information, indicating redundancy if both are selected. We aim to design an algorithm that intelligently chooses images based solely on the existing images and the candidate view positions.



First, let's consider a global optimization problem. Suppose the whole set of images is $\mathcal{R}$ and we need to choose the subset of images $\mathcal{R}_s$. We represent $R_{i \neq j}$ as all the images in $\mathcal{R}$ without the image $R_j$, then our goal can be formally described as minimizing the mutual information for $R_{i \neq j}$ and $R_j$. By Definition~\ref{def:multi}, it can be represent as the maximal mutual information between $R_i$ and $R_j$:
\begin{align*}
    \min_{\mathcal{R}_s \subset \mathcal{R}} \max_{R_j \in \mathcal{R}_s} I(R_{i\neq j}; R_j) = \min_{\mathcal{R}_s \subset \mathcal{R}} \max_{R_i, R_j \in \mathcal{R}_s} I(R_i, R_j)\,.
\end{align*}


Given $N$ figures in the subset $\mathcal{R}_s$, we can reformulate the goal from minimizing mutual information to maximizing relative information between images by Definition~\ref{def:mutual}. Thus, the problem becomes:
\begin{align*}
    \max_{\mathcal{R}_s \subset \mathcal{R}} \delta\,  
        \;\;\textrm{s.t.} \; H(R_i|R_j)\ge\delta,  \forall i,j\in \{1,2,\ldots N\},\,i\neq j \,.
\end{align*}
Then we use the solution as the training images to construct an informative NeRF.

\subsection{Greedy Algorithm}
\label{greed_algorithm}

Solving this problem is challenging without initial ground truth images for all candidate viewpoints and involves balancing $O(N^2)$ constraints. Thus, we adopt a near-optimal approximation algorithm that is both tractable and computationally efficient. We use a look-ahead strategy and greedy method to select views. Over $N$ iterations, we choose an image in each iteration that has minimal information overlap with the already selected images. In the 
$i$-th iteration, we solve the following problem:
\begin{align*}
        \max_{R_i \in \mathcal{R}} \delta_i
        \;\;\textrm{s.t.} \; H(R_i|R_j)\ge\delta_i,\forall 1\le j< i \,.
\end{align*}
Then the mutual information of $N$ images we choose is $\Tilde{\delta} = \min\{\delta_1, \delta_2, \ldots, \delta_N\}$.
Although this algorithm can not achieve the global minimum point of the primal problem, it is a 2-approximation based on the following lemma:

\begin{restatable}{lemma}{greedy}
\label{lem:greedy}
Assume the optimal value of the primal problem is $\delta$, the value we achieved by the greedy algorithm is $\Tilde{\delta}$, then we have $\Tilde{\delta} \ge \frac{1}{2}\delta$.
\end{restatable}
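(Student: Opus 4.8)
The plan is to recognize the primal problem as an instance of the classical max--min dispersion (remote-point) problem and the look-ahead greedy as the farthest-insertion heuristic, for which a factor-$2$ guarantee is standard. First I would strip away the information-theoretic language and work geometrically: by Assumption~\ref{ass:inverse} together with the symmetry of $d$ and $s$, the quantity $H(R_i\mid R_j)$ is a symmetric dissimilarity $D(R_i,R_j)$ on images, so the primal is exactly $\delta=\max_{|\mathcal{R}_s|=N}\min_{i\neq j}D(R_i,R_j)$, the $i$-th greedy step selects $g_i$ maximizing $\min_{j<i}D(g_i,g_j)=:\delta_i$, and $\tilde{\delta}=\min_i\delta_i$. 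Both the optimal set and the greedy set are size-$N$ subsets of $\mathcal{R}$, which is what the counting argument below will exploit.

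The core is a covering-plus-pigeonhole step. Let $k$ be an index attaining $\tilde{\delta}=\delta_k$. Because $g_k$ was chosen to maximize the minimum dissimilarity to $\{g_1,\dots,g_{k-1}\}$, every image $x\in\mathcal{R}$ satisfies $\min_{j<k}D(x,g_j)\le\delta_k=\tilde{\delta}$ (for a candidate $x$ this is greedy optimality, and for $x=g_j$ it is trivial); equivalently, the $k-1\le N-1$ balls $B(g_j,\tilde{\delta})=\{x:D(x,g_j)\le\tilde{\delta}\}$ cover $\mathcal{R}$. One may also note the $\delta_i$ are non-increasing, so that $k=N$, but this is not needed. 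Now take an optimal set $\{o_1,\dots,o_N\}$ with $\min_{a\neq b}D(o_a,o_b)=\delta$. Since $N$ optimal points lie in at most $N-1$ balls, two of them, say $o_a,o_b$, share a ball $B(g_j,\tilde{\delta})$, and the triangle inequality for $D$ then gives $\delta\le D(o_a,o_b)\le D(o_a,g_j)+D(g_j,o_b)\le 2\tilde{\delta}$, i.e.\ $\tilde{\delta}\ge\tfrac12\delta$.

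The main obstacle is exactly that final inequality: the whole argument rests on $D$ obeying a triangle inequality, whereas Lemma~\ref{lem:camera} makes the pixel-space dissimilarity proportional (up to an additive constant) to the \emph{squared} camera distance $\|\mathbf{o}-\overline{\mathbf{o}}\|_2^2$, which is not a metric. I would resolve this by running the greedy on the genuine Euclidean camera distance $\|\mathbf{o}-\overline{\mathbf{o}}\|_2$ actually used by the sparse-sampling routine, since that is a metric and the covering bound holds verbatim; the monotone relationship between $\|\cdot\|_2$ and $\|\cdot\|_2^2$ means the selected set is unchanged. I would likewise need to verify (or assume) that the semantic term $s$ and the sequential semantics/pixel combination respect a triangle-type inequality. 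Where a strictly non-metric surrogate (squared or cosine) is substituted, the clean factor $2$ should instead be stated for the metric dissimilarity, with only a constant-factor degradation carried through the same pigeonhole argument.
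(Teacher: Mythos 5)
Your argument is correct and rests on the same two ingredients as the paper's proof --- a pigeonhole count pairing the $N$ optimal images against at most $N-1$ greedy ``centers,'' followed by a triangle inequality for the dissimilarity --- but you package it as the standard direct covering argument for farthest-point greedy, whereas the paper proceeds by contradiction, first sorting images into those common to both solutions and those not, and then splitting into two cases according to whether the last greedy pick $\Tilde{R}_N$ lies in the common set. Your observation that greedy optimality at the step attaining $\Tilde{\delta}$ makes the balls $B(g_j,\Tilde{\delta})$, $j<k$, cover \emph{all} of $\mathcal{R}$ (hence all $N$ optimal points at once) collapses that case analysis entirely; the paper's Case~2 exists only because it restricts attention to the non-common images and must then separately handle $\Tilde{R}_N$. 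So your route is a genuine simplification that proves the same bound.

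You are also right to single out the triangle inequality as the real load-bearing step, and here you catch something the paper gets wrong: the paper justifies $H(R_{i_1}|R_{i_2})\le H(R_{i_1}|\Tilde{R}_{i_k})+H(R_{i_2}|\Tilde{R}_{i_k})$ by appeal to the definition of $d$ as an expected \emph{squared} Euclidean distance (cf.\ Lemma~\ref{lem:camera}, $d=T_1T_2\|\mathbf{o}-\overline{\mathbf{o}}\|_2^2+C$), but squared Euclidean distance does not satisfy the triangle inequality (three collinear cameras at $0,1,2$ give $4>1+1$), and the additive constant $C$ and the semantic term $s$ are further unexamined. Your proposed repairs are the right ones: either run the argument on the unsquared camera distance, which is a metric and selects the same views by monotonicity, keeping the factor $2$; or retain the squared surrogate and use $\|x-z\|^2\le 2\|x-y\|^2+2\|y-z\|^2$, which degrades the guarantee to $\Tilde{\delta}\ge\frac{1}{4}\delta$. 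Either way the lemma should be stated for a dissimilarity that is symmetric and satisfies a (possibly relaxed) triangle inequality, an hypothesis both your proof and the paper's silently require.
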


This lemma ensures that our greedy algorithm provides a good approximation to the optimal solution. Additionally, our algorithm substantially reduces the computational cost of the problem, as we only have $O(N)$ constraints in each instance, as opposed to 
$O(N^2)$. We will subsequently employ this iterative strategy for image selection in our experiments.
 

\subsection{Experiments}


\textbf{Setup} 
Our greedy algorithm in Section~\ref{greed_algorithm} follows a 'train-render-evaluate-pick' scheme similar to that in Active Learning~\citep{pan2022activenerf}: 1) start by training a NeRF model with initial observations, 2) render images from candidate views and evaluate them to select valuable ones, 3) train the NeRF model with the newly acquired ground-truth images corresponding to these selected views, then repeat to step 2. Compared to ActiveNeRF, we modify the evaluation metric in step 2 as minimizing mutual information, considering both semantic space distance and pixel space distance discussed in Section~\ref{framework_label}.


\textbf{Design}
By Assumption~\ref{ass:inverse} and Lemma~\ref{lem:camera}, we identify a viewpoint that exhibits both low semantic similarity measured by CLIP~\citep{radford2021learning} (large semantic space distance) and a considerable distance in camera positions (large pixel space distance). If we consider only camera pose, furthest view sampling (FVS) is optimal. However, incorporating semantic constraints necessitates balancing these two criteria. We propose a sequential approach: first prioritize semantics to select a subset from candidates, then evaluate based on camera pose (S$\rightarrow$P), or vice versa (P$\rightarrow$S). This strategy navigates the tradeoff without 
a tricky balance hyperparameter. The technical appendix provides more discussions.

\begin{table*}[t]
\centering
\begin{tabular}{c|c c c|c c c}
\hline
 \multirow{2}{*}{\textbf{Sampling Strategies}} & \multicolumn{3}{c|}{\quad ~ \textit{Setting \uppercase\expandafter{\romannumeral1},~ 20  observations:} ~ \quad} &  \multicolumn{3}{c}{\quad  ~ \textit{Setting \uppercase\expandafter{\romannumeral2},~ 10  observations:} ~ \quad}\\
 
 & \textbf{PSNR} $\uparrow$ & \textbf{SSIM} $\uparrow$ & \textbf{LPIPS} $\downarrow$  & \textbf{PSNR} $\uparrow$ & \textbf{SSIM} $\uparrow$ & \textbf{LPIPS} $\downarrow$\\
\hline
\hline
NeRF + Rand    & 16.626 & 0.822  & 0.186 & 15.111 & 0.779 & 0.256 \\
NeRF + FVS(\textbf{Pixel})  &  17.832   & 0.819 & 0.186 & 15.723 & 0.787 & 0.227 \\
NeRF + \textbf{Semantic} & 17.334 & 0.833 & 0.171 & 15.472 & 0.795 & 0.219 \\
ActiveNeRF  \quad  & 18.732 & 0.826 & 0.181 & 16.353 & 0.792 & 0.226 \\
\hline
\hline
Ours~(S$\rightarrow$P) & 18.930 & \textbf{0.846} & \textbf{0.149} & 16.718 & \textbf{0.810} & \textbf{0.205}\\
Ours~(P$\rightarrow$S) & \textbf{20.093} & 0.841 & 0.162 & \textbf{17.314} & 0.801 & 0.209\\
\bottomrule
\end{tabular}

\caption{\textbf{Quantitative comparison in Active Learning settings on Blender.} \textbf{NeRF + Rand:} Randomly capture new views in the candidates. \textbf{NeRF + FVS(Pixel):} Capture new views using furthest view sampling to maximize pixel space distance. \textbf{NeRF + Semantic:} Capture new views using CLIP to maximize semantic space distance.  \textbf{ActiveNeRF:} Capture new views using the ActiveNeRF scheme. \textbf{Ours~(S$\rightarrow$P):} First choose 20 views with the highest semantic space distance, then capture views within them based on the highest pixel space distance (camera pose). \textbf{Ours~(P$\rightarrow$S):} First capture 20 views with the highest pixel space distance, then capture views within them based on semantic space distance. \textbf{Setting \uppercase\expandafter{\romannumeral1}:} 4 initial observations and 4 extra observations obtained at 40K,80K,120K and 160K iterations. \textbf{Setting \uppercase\expandafter{\romannumeral2}:} 2 initial observations and 2 extra observations obtained at 40K,80K,120K and 160K iterations. 200K iterations for training in total. All results are produced using the ActiveNeRF codebase.} 
\vspace{-3mm}
\label{tab:active}
\end{table*}

\begin{table*}[t]
\centering
\begin{tabular}{c|c c c}
\hline
 \multirow{2}{*}{\textbf{Sampling Strategies}} & \multicolumn{3}{c}{\quad ~ \textit{Setting \uppercase\expandafter{\romannumeral1},~ 20  observations:} ~ \quad}\\
 
 & \quad \quad \textbf{PSNR} $\uparrow$ \quad & \quad \textbf{SSIM} $\uparrow$ \quad & \quad \textbf{LPIPS} $\downarrow$ \quad \\
\hline
\hline

\quad \quad semantic distance + 0.1 $*$ pixel distance	\quad \quad & 18.781	& 0.833	& 0.153\\
semantic distance + pixel distance	& 19.266	& 0.837	& 0.159\\
semantic distance + 10 $*$ pixel distance	& 18.345	& 0.821	& 0.187\\

\hline
\hline
Ours~(S$\rightarrow$P) & 18.930 & \textbf{0.846} & \textbf{0.149}\\
Ours~(P$\rightarrow$S) & \textbf{20.093} & 0.841 & 0.162\\
\bottomrule
\end{tabular}

\caption{\textbf{Ablations on balancing two metrics.} We use hyperparameters for pixel and semantic space distances, considering both factors simultaneously. Our sequential approaches (\textbf{Ours~(S$\rightarrow$P)} and \textbf{Ours~(P$\rightarrow$S)}) outperform the alternatives.}
\vspace{-3mm}
\label{tab:abliation_sparse_2}
\end{table*}

\begin{figure}[t!]
    \centering
    \includegraphics[width=0.99\linewidth]{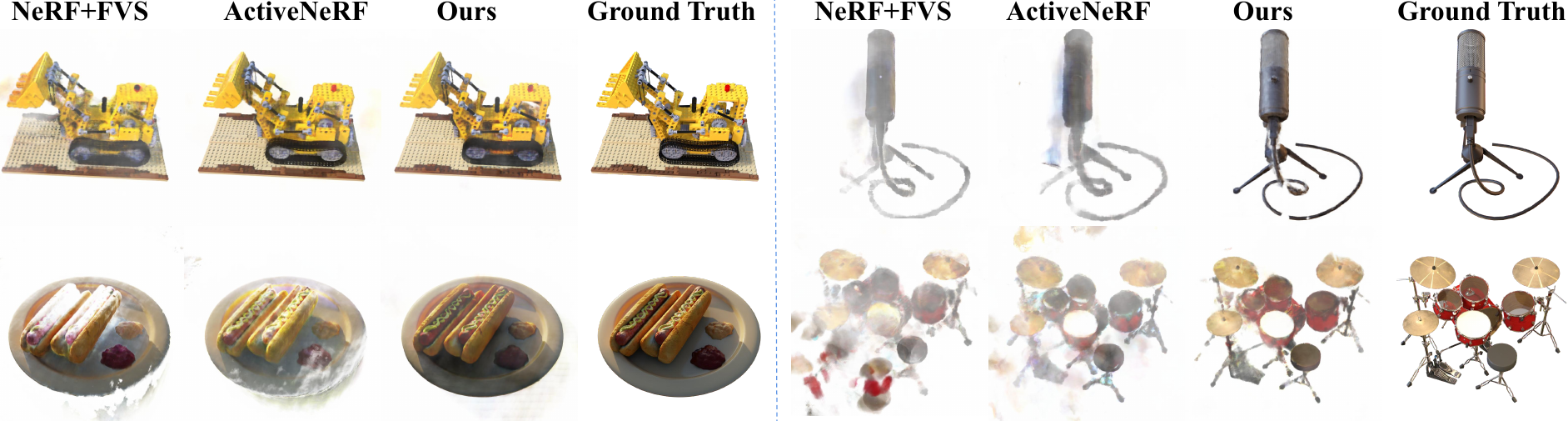}
    \caption{\textbf{Quantitative comparison in Active Learning settings on Blender.} Given limited input views, our strategy can select better candidate views. Our rendered images without excessively blurry boundaries exhibit greater clarity compared to those rendered by ActiveNeRF. }
    \label{fig:activenerf}
    \vspace{-4.5mm}
\end{figure}

\textbf{Dataset}
\textbf{Dataset and Metric}
We extensively evaluate our approach on the Blender~\citep{mildenhall2020nerf} dataset, which contains 8 synthetic objects with complex geometry and realistic materials and is classical in the NeRF research.
We report the image quality metrics PSNR, SSIM, and LPIPS for evaluations. SSIM measures differences in luminance, contrast, and structure, focusing on perceptual properties. PSNR assesses the absolute error between pixels, emphasizing pixel-wise comparison in a micro way. LPIPS quantifies perceptual similarity, capturing more global visual differences in a macro way.

\textbf{Results} We demonstrate the performance of our sampling strategy on the Blender dataset compared to baseline approaches in Table~\ref{tab:active} and Figure~\ref{fig:activenerf}. Our strategy outperforms baselines in view synthesis quality. Our method, which considers both the semantic space distance between visible and invisible views and a tendency towards uniform sampling, provides better sampling guidance under a limited input budget.
When prioritizing semantic space distance before pixel space distance (\textit{Ours (S$\rightarrow$P)}), we observe lower LPIPS scores (-17.6\%/-9.2\%) and higher SSIM scores (+2.4\%/+2.3\%), aligning more closely with human perception. Conversely, prioritizing pixel space distance first (\textit{Ours (P$\rightarrow$S)}) yields higher PSNR scores (+7.3\%/+5.9\%), reflecting differences in raw pixel values. In addition, as shown in Table~\ref{tab:abliation_sparse_2}, our sequential method can get better results than simultaneous method.

\textbf{Ablation} We conduct ablation studies using only semantic space distance or only pixel distance. As shown by \textit{NeRF + FVS(Pixel)} and \textit{NeRF + Semantic} in Table~\ref{tab:active}, considering either factor improves performance compared to the naive method. However, combining both metrics yields even better results, as seen in \textit{Ours (S$\rightarrow$P)} and \textit{Ours (P$\rightarrow$S)}.

\section{Few-shot View Synthesis}


In this section, we address the challenge of few-shot view synthesis, which is more prevalent in NeRF research: optimizing the NeRF model
with limited and fixed training images. The key is to extract valuable information from the training set while maintaining generalization capabilities.

A natural approach involves randomly rendering images from NeRF that lack ground truth and  leveraging the information extracted from them.
Based on this, our objective is to \textbf{maximize}\textbf{ mutual information} between visible training images and invisible inferred images,  expecting that inferred images without ground truth can gain more relevant information from known images.

To tackle this, we introduce two regularization terms  to train a generalizable NeRF model.

\subsection{The Design of Regularization Term} 
\label{design_regulatization}

In our framework, maximizing the mutual information between images involves minimizing both semantic space distance and pixel space distance. For the former, we can use CLIP~\citep{radford2021learning} as a macro regularization. However, camera position cannot be used to analyze pixel space distance as in Section~\ref{Section_view_sampling} because it is independent of NeRF parameters and cannot be optimized. Thus, we need a new metric that depends on both camera position and network parameters to serve as the micro regularization.

To fully utilize simple pixel-wise information, we establish a close relationship between RGB color differences and camera position distance detailed in the following lemma:

\begin{restatable}{lemma}{rgb}
\label{lem:rgb}
Assume we have two rays $\mathbf{r}(t) = \mathbf{o} + t\mathbf{d}$ and $\mathbf{\overline{r}}(t) = \mathbf{\overline{o}} + t\mathbf{\overline{d}}$. Assume the function $\sigma(\mathbf{r}(t))$ and $c(\mathbf{r}(t), \mathbf{d})$ learned by MLP is $L$-Lipschitz of $\mathbf{r}(t)$ and $\mathbf{d}$(We usually use Relu activation in MLP and it is a Lipschitz function). Then the distance between RGB colors of two rays can be upper bounded by the  Euclidean distance of two positions of cameras, $\|\mathbf{o}-\mathbf{\overline{o}}\|$, and it can be represented as
    \begin{align*}
       \|\hat{C}(\mathbf{r})- \hat{C}(\overline{\mathbf{r}})\| \le  3L\|\mathbf{o}-\overline{\mathbf{o}}\| + C \,.
    \end{align*}
    where $C$ is a constant independent of the distance $\|\mathbf{o} - \mathbf{\overline{o}}\|$.

\end{restatable}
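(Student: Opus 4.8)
The plan is to work directly with the discretized rendering formula $\hat{C}(\mathbf{r}) = \sum_{i=1}^N T_i \alpha_i \mathbf{c}_i$ and to compare the two rays sample-by-sample using a \emph{common} set of sampling times $\{t_1, \ldots, t_N\}$. With shared times the sampled positions are $\mathbf{r}(t_i) = \mathbf{o} + t_i\mathbf{d}$ and $\overline{\mathbf{r}}(t_i) = \overline{\mathbf{o}} + t_i\overline{\mathbf{d}}$, so the triangle inequality gives $\|\mathbf{r}(t_i) - \overline{\mathbf{r}}(t_i)\| \le \|\mathbf{o} - \overline{\mathbf{o}}\| + t_i\|\mathbf{d} - \overline{\mathbf{d}}\|$. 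The first term is exactly the camera-position distance we want to keep explicit, while the second is independent of $\|\mathbf{o} - \overline{\mathbf{o}}\|$ (directions and times are fixed scene data) and will be folded into the constant $C$. Applying the $L$-Lipschitz hypothesis then yields $|\sigma_i - \overline{\sigma}_i| \le L(\|\mathbf{o}-\overline{\mathbf{o}}\| + t_i\|\mathbf{d}-\overline{\mathbf{d}}\|)$ and an analogous bound for $\|\mathbf{c}_i - \overline{\mathbf{c}}_i\|$, which in addition picks up an $L\|\mathbf{d}-\overline{\mathbf{d}}\|$ contribution from the directional argument.

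Next I would control the rendered colour difference through its product structure. Writing $\hat{C}(\mathbf{r}) - \hat{C}(\overline{\mathbf{r}}) = \sum_i (T_i\alpha_i\mathbf{c}_i - \overline{T}_i\overline{\alpha}_i\overline{\mathbf{c}}_i)$ and inserting the standard three-term telescoping identity
$$T_i\alpha_i\mathbf{c}_i - \overline{T}_i\overline{\alpha}_i\overline{\mathbf{c}}_i = (T_i - \overline{T}_i)\alpha_i\mathbf{c}_i + \overline{T}_i(\alpha_i - \overline{\alpha}_i)\mathbf{c}_i + \overline{T}_i\overline{\alpha}_i(\mathbf{c}_i - \overline{\mathbf{c}}_i),$$
I would bound each unwanted factor by its natural range: $T_i, \overline{T}_i, \alpha_i, \overline{\alpha}_i \in [0,1]$ because $\sigma \ge 0$ and $\delta_i > 0$, while $\mathbf{c}_i$ is bounded since colours lie in a bounded set. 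Each of the three summands then contributes exactly one Lipschitz factor $L$ acting on a position/direction difference, which is the structural origin of the constant $3$ in the bound $3L$. The estimate for $|\alpha_i - \overline{\alpha}_i|$ follows from $1-e^{-x}$ being $1$-Lipschitz together with $|\sigma_i\delta_i - \overline{\sigma}_i\delta_i| = \delta_i|\sigma_i - \overline{\sigma}_i|$, and $|T_i - \overline{T}_i|$ follows from $e^{-x}$ being $1$-Lipschitz on $x \ge 0$ applied to the accumulated density $\sum_{j<i}\sigma_j\delta_j$.

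Finally I would sum over $i$ and separate the camera-dependent part from the rest. The crucial bookkeeping is that the accumulation weights behave like a sub-probability distribution: $\sum_i T_i\alpha_i = 1 - T_{N+1} \le 1$ by telescoping. After collecting terms, the coefficient multiplying $\|\mathbf{o}-\overline{\mathbf{o}}\|$ consolidates to $3L$, and every remaining contribution — those proportional to $\|\mathbf{d}-\overline{\mathbf{d}}\|$, to $t_i$, and to the fixed discretization data — is independent of $\|\mathbf{o}-\overline{\mathbf{o}}\|$ and hence absorbed into $C$.

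I expect the main obstacle to be the transmittance term $|T_i - \overline{T}_i|$: because $T_i$ depends on all preceding densities $\sigma_1, \ldots, \sigma_{i-1}$, a naive estimate accumulates the per-sample errors and threatens to introduce a factor of $N$ rather than a clean constant. Making the stated bound genuinely free of $N$ requires either reweighting the accumulated errors against the bounded weights $T_i\alpha_i$ or folding the discretization-dependent multipliers into $C$; checking that this can be done without secretly reintroducing dependence on $\|\mathbf{o}-\overline{\mathbf{o}}\|$ is the delicate point. The remaining pieces — the range bounds and the Lipschitz estimates for $\alpha_i$ and $\mathbf{c}_i$ — are routine.
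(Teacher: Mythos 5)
Your proposal is correct and follows essentially the same route as the paper's proof: the same three-term product decomposition of $T_i\alpha_i\mathbf{c}_i - \overline{T}_i\overline{\alpha}_i\overline{\mathbf{c}}_i$, the same $1$-Lipschitz bounds on the exponentials combined with the Lipschitzness of $\sigma$ and $c$, and the same absorption of all $\|\mathbf{d}-\overline{\mathbf{d}}\|$-dependent terms into $C$. The delicate point you flag about $|T_i - \overline{T}_i|$ is resolved in the paper exactly as you anticipate — the accumulated error telescopes to $t_iL\|\mathbf{o}-\overline{\mathbf{o}}\|$ with $t_i\le 1$, and the extra weight $\alpha_i\le\delta_i$ with $\sum_i\delta_i=1$ keeps the final coefficient at $3L$ with no factor of $N$.
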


From Lemma~\ref{lem:rgb}, we know that the difference in RGB color serves as a lower bound for the difference in camera position. According to Lemma~\ref{lem:camera}, it also acts as a lower bound for pixel space distance. Therefore, to reduce pixel space distance, we aim to minimize the color difference (like color variance or KL divergence) between training ground truth images and randomly rendered images. 

Then we can define our two  plug-and-play regularization terms added to the loss function:
\begin{align*}
    L_{\text{macro}}(R, \overline{R}) &= s(R, \overline{R}) = 1 - \frac{f(R)f(\overline{R})}{\|f(R)\|\|f(\overline{R})\|} \,, \\
    L_{\text{micro}}(R, \overline{R}) &=  \sum_{\mathbf{r}\in R, \overline{\mathbf{r}} \in \overline{R}} \|\hat{C}(\mathbf{r})- \hat{C}(\overline{\mathbf{r}})\|.
\end{align*}


\subsection{Experiments}

\textbf{Setup}
To demonstrate the effectiveness of our method, we evaluate it on three datasets under few-shot settings: Blender~\citep{mildenhall2020nerf}, DTU~\citep{jensen2014large}, and LLFF~\citep{mildenhall2019local}. We compare our method against classical NeRF and state-of-the-art baselines such as FreeNeRF~\citep{yang2023freenerf}.




\textbf{Design}
We add our regularization terms $L_{\text{macro}}$ and $L_{\text{micro}}$ to maximize mutual information. Specifically, $L_{\text{micro}}$ is the variance of the mean color value between training images and randomly rendered images, ensuring that the color difference is constrained to some degree.

\begin{table*}[t!]
\centering

\begin{tabular}{c|cc|cc|ccc}
\hline

\multirow{2}{*}{\textbf{Method}} & \multicolumn{2}{c|}{\textbf{DTU}(Object)} & \multicolumn{2}{c|}{\textbf{DTU}(Full image)} & \multicolumn{3}{c}{\textbf{LLFF}} \\
 & \textbf{PSNR} $\uparrow$ & \textbf{SSIM} $\uparrow$ & \textbf{PSNR} $\uparrow$ & \textbf{SSIM} $\uparrow$  & \textbf{PSNR} $\uparrow$ & \textbf{SSIM} $\uparrow$ & \textbf{LPIPS} $\downarrow$ \\
\hline


\hline
\hline

Mip-NeRF
& 9.10
& 0.578
& 7.94
& 0.235
& 16.11     
& 0.401  
& 0.460\\

DiffusioNeRF
&  16.20
&  0.698
&  /
& /
& 19.79 & 0.568 &   0.338 \\

DietNeRF
& 11.85
& 0.633
& 10.01
& 0.354
& 14.94     
&0.370   
&0.496 \\

\textbf{DietNeRF+Ours}
& 13.04
& 0.711
& 11.95
& 0.410
& 16.01     
&0.433   
&0.443  \\


RegNeRF
& 18.50
& 0.744
& 15.00
& 0.606
& 18.84    
& 0.573 
&0.345 \\

\textbf{RegNeRF+Ours}
& 19.78
& 0.791
& 15.79
& 0.634
& 19.44     
&0.611   
&0.322  \\

FreeNeRF
&  19.92
&  0.787
&  18.02
& 0.680
& 19.63 & 0.612 &   0.308 \\

\hline
\hline

 \multirow{2}{*}{\textbf{FreeNeRF+Ours}}
& \textbf{20.42}
& \textbf{0.814}
&\textbf{ 18.63}
& \textbf{0.712}
& \textbf{20.17 } & \textbf{ 0.634 } & \textbf{ 0.274}  \\

& \textbf{(+0.50)}
& \textbf{(+0.027)}
& \textbf{(+0.61)}
&\textbf{ (+0.032)}
& \textbf{(+0.54)}
& \textbf{(+0.022)}
& \textbf{(-0.034)} \\
\bottomrule
\end{tabular}
\vspace{-1.5mm}
\caption{\textbf{Quantitative comparison on LLFF and DTU.} There are 3 input views  for training, consistent with FreeNeRF. On DTU, we use objects’ masks to remove the background when computing metrics, as full-image evaluation is biased
towards the background, as reported by~\citep{yu2021pixelnerf,niemeyer2022regnerf}.} 
\vspace{-1.5mm}
\label{tab:blender}
\end{table*}


\begin{table*}[t!]
\centering

\begin{tabular}{c|c|c|c}
\hline
\textbf{Method} & \textbf{PSNR} $\uparrow$ & \textbf{SSIM} $\uparrow$ & \textbf{LPIPS} $\downarrow$  \\
\hline
\hline
NeRF     & 14.934 & 0.687  & 0.318 \\
NV &  17.859   & 0.741 & 0.245  \\
\quad\quad Simplified NeRF \quad\quad  & 20.092 & 0.822 & 0.179  \\
NeRF + $L_{micro}$  &  20.101~(+5.167)   & 0.799(+0.112)   & 0.151(-0.167)  \\
NeRF + $L_{macro}$ (DietNeRF)  &  22.503~(+7.569)   & 0.823~(+0.136)   & 0.124(-0.194)  \\
\textbf{ \ NeRF + $L_{micro}$ + $L_{macro}$ (Ours) \ } &  \textbf{23.394~(+8.460)}  & \textbf{0.859~(+0.172)} &  \textbf{0.103~(-0.215)}  \\
\hline
\hline
FreeNeRF &  24.259 & 0.883 &   0.098 \\
\textbf{FreeNeRF+Ours} &  \textbf{ \ 24.896~(+0.637) \ } & \textbf{ \  0.904~(+0.021) \ } & \textbf{ \ 0.086~(-0.012) \ }  \\
\bottomrule
\end{tabular}
\vspace{-1.5mm}
\caption{\textbf{Quantitative comparison on Blender.} There are 8 input views  for training, consistent with FreeNeRF. For DietNeRF, the consistency loss actually belongs to the $L_{\text{macro}}$, so DietNeRF is a degradation of our framework.} 
\vspace{-1.7mm}
\label{tab:ablation}
\end{table*}

\begin{figure*}[t!]
    \centering
\includegraphics[width=0.9\linewidth]{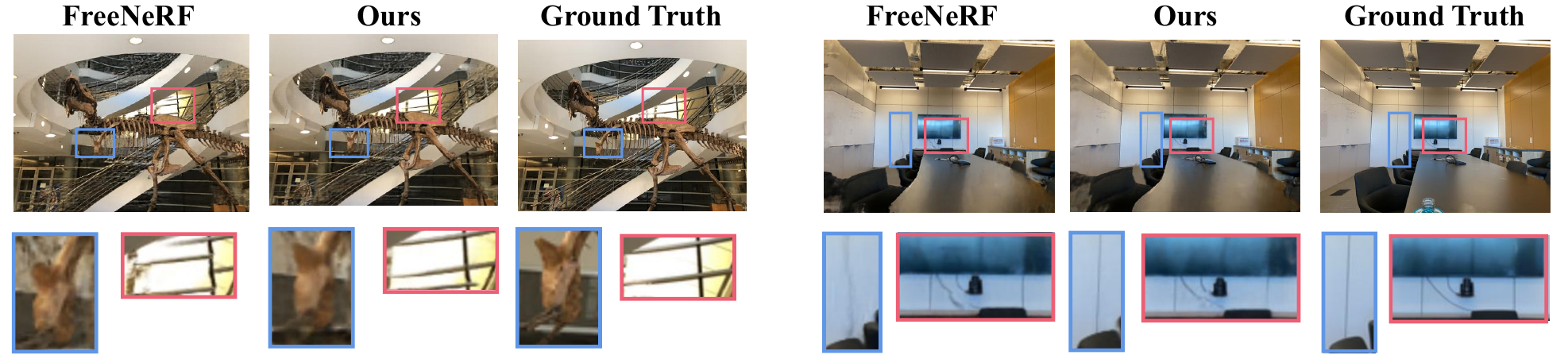}
    \caption{\textbf{Qualitative comparison on LLFF.}
    Given 3 input views, we show novel views rendered by FreeNeRF and ours Compared with FreeNeRF. FreeNeRF fails to render sharp outlines in some places, but our additional losses can gain a more detailed skeleton structure and better geometry for the observed objects.
    }
    \label{llffpic}
    \vspace{-5mm}
\end{figure*}



\textbf{Comparison with baseline methods.} 
Table~\ref{tab:blender} and Figure~\ref{llffpic} present the quantitative and qualitative results of the DTU dataset and the LLFF dataset under a 3-view setting. Table~\ref{tab:ablation} also presents the improvements on the blender dataset under an 8-view setting. Incorporating $L_{\text{macro}}$ and $L_{\text{micro}}$, our method builds on the RegNeRF/FreeNeRF framework, introducing additional regularization terms. These constraints enhance the consistency of unobservable views from both semantic and color perspectives. The improvements in results validate the effectiveness of our approach.
Our framework facilitates the design and application of various regularization terms, leading to improved outcomes. While we focused on $L_{\text{macro}}$ and $L_{\text{micro}}$, our framework is not limited to these specific terms. It allows for the exploration of various regularization methods, providing flexibility to experiment with and integrate different approaches. Detailed explanations are provided in the appendix.


\textbf{Ablations.} In Table~\ref{tab:ablation}, we decompose two regularization terms to prove the effectiveness of each. For clearer comparison, we compare with classical NeRF, as many methods, such as DietNeRF with semantic consistency loss or FreeNeRF with free frequency regularizations, include various regularization terms that may overlap with ours to some extent.
If we normalize the improvements in PSNR, SSIM, and LPIPS with both regularization terms to 1, the improvements with only $L_{\text{micro}}$ are 0.61, 0.65, and 0.78, respectively. With only $L_{\text{macro}}$, the improvements are 0.89, 0.79, and 0.90. While $L_{\text{macro}}$ has a slightly more significant impact, using both terms together yields the best results.

\section{Conclusion, Limitations and Future Directions}

This paper presents a novel NeRF framework under limited samples using Mutual Information Theory. We introduce mutual information from both macro (semantic space) and micro (pixel space) perspectives in different settings. In sparse view sampling, we employ a greedy algorithm to minimize mutual information. In few-shot view synthesis, we utilize plug-and-play regularization terms to maximize it. Experiments across different settings validate the robustness of our framework.

Our framework has some limitations, particularly in terms of comparisons with the diffusion-based methods. We were unable to include these comparisons due to the lack of open-source code or differing dataset settings, which are detailed in the appendix. Future work should aim to incorporate more methods and explore additional variations within our framework.

\bibliography{uai2025-template}

\begin{thebibliography}{40}
\providecommand{\natexlab}[1]{#1}
\providecommand{\url}[1]{\texttt{#1}}
\expandafter\ifx\csname urlstyle\endcsname\relax
  \providecommand{\doi}[1]{doi: #1}\else
  \providecommand{\doi}{doi: \begingroup \urlstyle{rm}\Url}\fi

\bibitem[Caron et~al.(2021)Caron, Touvron, Misra, J{\'e}gou, Mairal, Bojanowski, and Joulin]{caron2021emerging}
Mathilde Caron, Hugo Touvron, Ishan Misra, Herv{\'e} J{\'e}gou, Julien Mairal, Piotr Bojanowski, and Armand Joulin.
\newblock Emerging properties in self-supervised vision transformers.
\newblock In \emph{Proceedings of the IEEE/CVF international conference on computer vision}, pages 9650--9660, 2021.

\bibitem[Deng et~al.(2022)Deng, Jiang, Qi, Yan, Zhou, Guibas, Anguelov, et~al.]{deng2022nerdi}
Congyue Deng, Chiyu Jiang, Charles~R Qi, Xinchen Yan, Yin Zhou, Leonidas Guibas, Dragomir Anguelov, et~al.
\newblock Nerdi: Single-view nerf synthesis with language-guided diffusion as general image priors.
\newblock \emph{arXiv preprint arXiv:2212.03267}, 2022.

\bibitem[Fridovich-Keil et~al.(2022)Fridovich-Keil, Yu, Tancik, Chen, Recht, and Kanazawa]{fridovich2022plenoxels}
Sara Fridovich-Keil, Alex Yu, Matthew Tancik, Qinhong Chen, Benjamin Recht, and Angjoo Kanazawa.
\newblock Plenoxels: Radiance fields without neural networks.
\newblock In \emph{Proceedings of the IEEE/CVF Conference on Computer Vision and Pattern Recognition}, pages 5501--5510, 2022.

\bibitem[Fu et~al.(2018)Fu, Wang, Hao, and Wu]{fu2018scalable}
Weijie Fu, Meng Wang, Shijie Hao, and Xindong Wu.
\newblock Scalable active learning by approximated error reduction.
\newblock In \emph{Proceedings of the 24th ACM SIGKDD international conference on knowledge discovery \& data mining}, pages 1396--1405, 2018.

\bibitem[Hedman et~al.(2021)Hedman, Srinivasan, Mildenhall, Barron, and Debevec]{hedman2021baking}
Peter Hedman, Pratul~P Srinivasan, Ben Mildenhall, Jonathan~T Barron, and Paul Debevec.
\newblock Baking neural radiance fields for real-time view synthesis.
\newblock In \emph{Proceedings of the IEEE/CVF International Conference on Computer Vision}, pages 5875--5884, 2021.

\bibitem[Henaff(2020)]{henaff2020data}
Olivier Henaff.
\newblock Data-efficient image recognition with contrastive predictive coding.
\newblock In \emph{International conference on machine learning}, pages 4182--4192. PMLR, 2020.

\bibitem[Jain et~al.(2021)Jain, Tancik, and Abbeel]{jain2021putting}
Ajay Jain, Matthew Tancik, and Pieter Abbeel.
\newblock Putting nerf on a diet: Semantically consistent few-shot view synthesis.
\newblock In \emph{Proceedings of the IEEE/CVF International Conference on Computer Vision}, pages 5885--5894, 2021.

\bibitem[Jensen et~al.(2014)Jensen, Dahl, Vogiatzis, Tola, and Aan{\ae}s]{jensen2014large}
Rasmus Jensen, Anders Dahl, George Vogiatzis, Engin Tola, and Henrik Aan{\ae}s.
\newblock Large scale multi-view stereopsis evaluation.
\newblock In \emph{Proceedings of the IEEE conference on computer vision and pattern recognition}, pages 406--413, 2014.

\bibitem[Kim et~al.(2022)Kim, Seo, and Han]{kim2022infonerf}
Mijeong Kim, Seonguk Seo, and Bohyung Han.
\newblock Infonerf: Ray entropy minimization for few-shot neural volume rendering.
\newblock In \emph{Proceedings of the IEEE/CVF Conference on Computer Vision and Pattern Recognition}, pages 12912--12921, 2022.

\bibitem[Li et~al.(2022)Li, Li, Xiong, and Hoi]{li2022blip}
Junnan Li, Dongxu Li, Caiming Xiong, and Steven Hoi.
\newblock Blip: Bootstrapping language-image pre-training for unified vision-language understanding and generation.
\newblock In \emph{International conference on machine learning}, pages 12888--12900. PMLR, 2022.

\bibitem[Liu et~al.(2021)Liu, Fan, Zhang, Dong, Funkhouser, and Yi]{liu2021contrastive}
Yunze Liu, Qingnan Fan, Shanghang Zhang, Hao Dong, Thomas Funkhouser, and Li~Yi.
\newblock Contrastive multimodal fusion with tupleinfonce.
\newblock In \emph{Proceedings of the IEEE/CVF International Conference on Computer Vision}, pages 754--763, 2021.

\bibitem[Martin-Brualla et~al.(2021)Martin-Brualla, Radwan, Sajjadi, Barron, Dosovitskiy, and Duckworth]{martin2021nerf}
Ricardo Martin-Brualla, Noha Radwan, Mehdi~SM Sajjadi, Jonathan~T Barron, Alexey Dosovitskiy, and Daniel Duckworth.
\newblock Nerf in the wild: Neural radiance fields for unconstrained photo collections.
\newblock In \emph{Proceedings of the IEEE/CVF Conference on Computer Vision and Pattern Recognition}, pages 7210--7219, 2021.

\bibitem[Meng et~al.(2021)Meng, Chen, Luo, Wu, Su, Xu, He, and Yu]{meng2021gnerf}
Quan Meng, Anpei Chen, Haimin Luo, Minye Wu, Hao Su, Lan Xu, Xuming He, and Jingyi Yu.
\newblock Gnerf: Gan-based neural radiance field without posed camera.
\newblock In \emph{Proceedings of the IEEE/CVF International Conference on Computer Vision}, pages 6351--6361, 2021.

\bibitem[Mildenhall et~al.(2019)Mildenhall, Srinivasan, Ortiz-Cayon, Kalantari, Ramamoorthi, Ng, and Kar]{mildenhall2019local}
Ben Mildenhall, Pratul~P Srinivasan, Rodrigo Ortiz-Cayon, Nima~Khademi Kalantari, Ravi Ramamoorthi, Ren Ng, and Abhishek Kar.
\newblock Local light field fusion: Practical view synthesis with prescriptive sampling guidelines.
\newblock \emph{ACM Transactions on Graphics (TOG)}, 38\penalty0 (4):\penalty0 1--14, 2019.

\bibitem[Mildenhall et~al.(2020)Mildenhall, Srinivasan, Tancik, Barron, Ramamoorthi, and Ng]{mildenhall2020nerf}
Ben Mildenhall, Pratul~P. Srinivasan, Matthew Tancik, Jonathan~T. Barron, Ravi Ramamoorthi, and Ren Ng.
\newblock Nerf: Representing scenes as neural radiance fields for view synthesis.
\newblock In \emph{ECCV}, 2020.

\bibitem[Niemeyer et~al.(2022)Niemeyer, Barron, Mildenhall, Sajjadi, Geiger, and Radwan]{niemeyer2022regnerf}
Michael Niemeyer, Jonathan~T Barron, Ben Mildenhall, Mehdi~SM Sajjadi, Andreas Geiger, and Noha Radwan.
\newblock Regnerf: Regularizing neural radiance fields for view synthesis from sparse inputs.
\newblock In \emph{Proceedings of the IEEE/CVF Conference on Computer Vision and Pattern Recognition}, pages 5480--5490, 2022.

\bibitem[Oord et~al.(2018)Oord, Li, and Vinyals]{oord2018representation}
Aaron van~den Oord, Yazhe Li, and Oriol Vinyals.
\newblock Representation learning with contrastive predictive coding.
\newblock \emph{arXiv preprint arXiv:1807.03748}, 2018.

\bibitem[Pan et~al.(2022)Pan, Lai, Song, and Huang]{pan2022activenerf}
Xuran Pan, Zihang Lai, Shiji Song, and Gao Huang.
\newblock Activenerf: Learning where to see with uncertainty estimation.
\newblock In \emph{Computer Vision--ECCV 2022: 17th European Conference, Tel Aviv, Israel, October 23--27, 2022, Proceedings, Part XXXIII}, pages 230--246. Springer, 2022.

\bibitem[Park et~al.(2021)Park, Sinha, Barron, Bouaziz, Goldman, Seitz, and Martin-Brualla]{park2021nerfies}
Keunhong Park, Utkarsh Sinha, Jonathan~T Barron, Sofien Bouaziz, Dan~B Goldman, Steven~M Seitz, and Ricardo Martin-Brualla.
\newblock Nerfies: Deformable neural radiance fields.
\newblock In \emph{Proceedings of the IEEE/CVF International Conference on Computer Vision}, pages 5865--5874, 2021.

\bibitem[Pumarola et~al.(2021)Pumarola, Corona, Pons-Moll, and Moreno-Noguer]{pumarola2021d}
Albert Pumarola, Enric Corona, Gerard Pons-Moll, and Francesc Moreno-Noguer.
\newblock D-nerf: Neural radiance fields for dynamic scenes.
\newblock In \emph{Proceedings of the IEEE/CVF Conference on Computer Vision and Pattern Recognition}, pages 10318--10327, 2021.

\bibitem[Radford et~al.(2021)Radford, Kim, Hallacy, Ramesh, Goh, Agarwal, Sastry, Askell, Mishkin, Clark, et~al.]{radford2021learning}
Alec Radford, Jong~Wook Kim, Chris Hallacy, Aditya Ramesh, Gabriel Goh, Sandhini Agarwal, Girish Sastry, Amanda Askell, Pamela Mishkin, Jack Clark, et~al.
\newblock Learning transferable visual models from natural language supervision.
\newblock In \emph{International conference on machine learning}, pages 8748--8763. PMLR, 2021.

\bibitem[Reizenstein et~al.(2021)Reizenstein, Shapovalov, Henzler, Sbordone, Labatut, and Novotny]{reizenstein2021common}
Jeremy Reizenstein, Roman Shapovalov, Philipp Henzler, Luca Sbordone, Patrick Labatut, and David Novotny.
\newblock Common objects in 3d: Large-scale learning and evaluation of real-life 3d category reconstruction.
\newblock In \emph{Proceedings of the IEEE/CVF International Conference on Computer Vision}, pages 10901--10911, 2021.

\bibitem[Sener and Savarese(2017)]{sener2017active}
Ozan Sener and Silvio Savarese.
\newblock Active learning for convolutional neural networks: A core-set approach.
\newblock \emph{arXiv preprint arXiv:1708.00489}, 2017.

\bibitem[Settles(2009)]{settles2009active}
Burr Settles.
\newblock Active learning literature survey.
\newblock 2009.

\bibitem[Takikawa et~al.(2021)Takikawa, Litalien, Yin, Kreis, Loop, Nowrouzezahrai, Jacobson, McGuire, and Fidler]{takikawa2021neural}
Towaki Takikawa, Joey Litalien, Kangxue Yin, Karsten Kreis, Charles Loop, Derek Nowrouzezahrai, Alec Jacobson, Morgan McGuire, and Sanja Fidler.
\newblock Neural geometric level of detail: Real-time rendering with implicit 3d shapes.
\newblock In \emph{Proceedings of the IEEE/CVF Conference on Computer Vision and Pattern Recognition}, pages 11358--11367, 2021.

\bibitem[Tancik et~al.(2022)Tancik, Casser, Yan, Pradhan, Mildenhall, Srinivasan, Barron, and Kretzschmar]{tancik2022block}
Matthew Tancik, Vincent Casser, Xinchen Yan, Sabeek Pradhan, Ben Mildenhall, Pratul~P Srinivasan, Jonathan~T Barron, and Henrik Kretzschmar.
\newblock Block-nerf: Scalable large scene neural view synthesis.
\newblock In \emph{Proceedings of the IEEE/CVF Conference on Computer Vision and Pattern Recognition}, pages 8248--8258, 2022.

\bibitem[Tian et~al.(2020)Tian, Krishnan, and Isola]{tian2020contrastive}
Yonglong Tian, Dilip Krishnan, and Phillip Isola.
\newblock Contrastive multiview coding.
\newblock In \emph{Computer Vision--ECCV 2020: 16th European Conference, Glasgow, UK, August 23--28, 2020, Proceedings, Part XI 16}, pages 776--794. Springer, 2020.

\bibitem[Tschannen et~al.(2019)Tschannen, Djolonga, Rubenstein, Gelly, and Lucic]{tschannen2019mutual}
Michael Tschannen, Josip Djolonga, Paul~K Rubenstein, Sylvain Gelly, and Mario Lucic.
\newblock On mutual information maximization for representation learning.
\newblock \emph{arXiv preprint arXiv:1907.13625}, 2019.

\bibitem[Wang et~al.(2023)Wang, Chen, Loy, and Liu]{wang2023sparsenerf}
Guangcong Wang, Zhaoxi Chen, Chen~Change Loy, and Ziwei Liu.
\newblock Sparsenerf: Distilling depth ranking for few-shot novel view synthesis.
\newblock \emph{arXiv preprint arXiv:2303.16196}, 2023.

\bibitem[Wang et~al.(2021)Wang, Wu, Xie, Chen, and Prisacariu]{wang2021nerf}
Zirui Wang, Shangzhe Wu, Weidi Xie, Min Chen, and Victor~Adrian Prisacariu.
\newblock Nerf--: Neural radiance fields without known camera parameters.
\newblock \emph{arXiv preprint arXiv:2102.07064}, 2021.

\bibitem[Williams and Beer(2010)]{williams2010nonnegative}
Paul~L Williams and Randall~D Beer.
\newblock Nonnegative decomposition of multivariate information.
\newblock \emph{arXiv preprint arXiv:1004.2515}, 2010.

\bibitem[Wu et~al.(2023)Wu, Mildenhall, Henzler, Park, Gao, Watson, Srinivasan, Verbin, Barron, Poole, et~al.]{wu2023reconfusion}
Rundi Wu, Ben Mildenhall, Philipp Henzler, Keunhong Park, Ruiqi Gao, Daniel Watson, Pratul~P Srinivasan, Dor Verbin, Jonathan~T Barron, Ben Poole, et~al.
\newblock Reconfusion: 3d reconstruction with diffusion priors.
\newblock \emph{arXiv preprint arXiv:2312.02981}, 2023.

\bibitem[Xiangli et~al.(2021)Xiangli, Xu, Pan, Zhao, Rao, Theobalt, Dai, and Lin]{xiangli2021citynerf}
Yuanbo Xiangli, Linning Xu, Xingang Pan, Nanxuan Zhao, Anyi Rao, Christian Theobalt, Bo~Dai, and Dahua Lin.
\newblock Citynerf: Building nerf at city scale.
\newblock \emph{arXiv preprint arXiv:2112.05504}, 2021.

\bibitem[Yang et~al.(2023)Yang, Pavone, and Wang]{yang2023freenerf}
Jiawei Yang, Marco Pavone, and Yue Wang.
\newblock Freenerf: Improving few-shot neural rendering with free frequency regularization.
\newblock \emph{arXiv preprint arXiv:2303.07418}, 2023.

\bibitem[Yi et~al.(2016)Yi, Kim, Ceylan, Shen, Yan, Su, Lu, Huang, Sheffer, and Guibas]{yi2016scalable}
Li~Yi, Vladimir~G Kim, Duygu Ceylan, I-Chao Shen, Mengyan Yan, Hao Su, Cewu Lu, Qixing Huang, Alla Sheffer, and Leonidas Guibas.
\newblock A scalable active framework for region annotation in 3d shape collections.
\newblock \emph{ACM Transactions on Graphics (ToG)}, 35\penalty0 (6):\penalty0 1--12, 2016.

\bibitem[Yu et~al.(2021{\natexlab{a}})Yu, Li, Tancik, Li, Ng, and Kanazawa]{yu2021plenoctrees}
Alex Yu, Ruilong Li, Matthew Tancik, Hao Li, Ren Ng, and Angjoo Kanazawa.
\newblock Plenoctrees for real-time rendering of neural radiance fields.
\newblock In \emph{Proceedings of the IEEE/CVF International Conference on Computer Vision}, pages 5752--5761, 2021{\natexlab{a}}.

\bibitem[Yu et~al.(2021{\natexlab{b}})Yu, Ye, Tancik, and Kanazawa]{yu2021pixelnerf}
Alex Yu, Vickie Ye, Matthew Tancik, and Angjoo Kanazawa.
\newblock pixelnerf: Neural radiance fields from one or few images.
\newblock In \emph{Proceedings of the IEEE/CVF Conference on Computer Vision and Pattern Recognition}, pages 4578--4587, 2021{\natexlab{b}}.

\bibitem[Zhang et~al.(2020)Zhang, Riegler, Snavely, and Koltun]{zhang2020nerf++}
Kai Zhang, Gernot Riegler, Noah Snavely, and Vladlen Koltun.
\newblock Nerf++: Analyzing and improving neural radiance fields.
\newblock \emph{arXiv preprint arXiv:2010.07492}, 2020.

\bibitem[Zhou and Tulsiani(2022)]{zhou2022sparsefusion}
Zhizhuo Zhou and Shubham Tulsiani.
\newblock Sparsefusion: Distilling view-conditioned diffusion for 3d reconstruction.
\newblock \emph{arXiv preprint arXiv:2212.00792}, 2022.

\bibitem[Zolfaghari~Bengar et~al.(2019)Zolfaghari~Bengar, Gonzalez-Garcia, Villalonga, Raducanu, Habibi~Aghdam, Mozerov, Lopez, and Van~de Weijer]{zolfaghari2019temporal}
Javad Zolfaghari~Bengar, Abel Gonzalez-Garcia, Gabriel Villalonga, Bogdan Raducanu, Hamed Habibi~Aghdam, Mikhail Mozerov, Antonio~M Lopez, and Joost Van~de Weijer.
\newblock Temporal coherence for active learning in videos.
\newblock In \emph{Proceedings of the IEEE/CVF International Conference on Computer Vision Workshops}, pages 0--0, 2019.

\end{thebibliography}

\newpage

\onecolumn

\title{MutualNeRF: Improve the Performance of NeRF under Limited Samples with Mutual Information Theory \\(Supplementary Material)}
\maketitle

\appendix
\section{Proof}
In this section, we mainly prove the lemmas used in our paper.

\subsection{Proof of Lemma~\ref{lem:camera}}
\camera*

\begin{proof}
    Denote $\mathbf{o} = (o_1,o_2,o_3)$, $\mathbf{\overline{o}} = (\overline{o}_1, \overline{o}_2, \overline{o}_3)$.
    Using the property of uniform distribution and the spherical polar coordinates, we have 
    \begin{align*}
         d(R, \overline{R}) &= E_{\mathbf{r} \in R, \mathbf{\overline{r}}  \in \overline{R}} \left [\int_{ 0}^{T_1} \int_{0}^{T_2} \|\mathbf{r}(t_1)-\mathbf{\overline{r}}(t_2)\|^2_2 dt_2 dt_1 \right] \\
        &= \int_{ 0}^{T_1} \int_{0}^{T_2} E_{\theta, \phi}[(o_1 - \overline{o}_1 + t_1 \cos \theta_1 \cos \phi_1 - t_2 \cos \theta_2 \cos \phi_2)^2  \\
        &+(o_2 - \overline{o}_2 + t_1 \cos \theta_1 \sin \phi_1 - t_2 \cos \theta_2 \sin \phi_2)^2 + (o_3 - \overline{o}_3 + t_1 \sin \theta_1  - t_2 \sin\theta_2)^2]dt_2 dt_1\\
        &= \int_{ 0}^{T_1} \int_{0}^{T_2}[ \|\mathbf{o}-\mathbf{\overline{o}}\|^2_2 + C_1(\mathbf{o}, \mathbf{\overline{o}}, t_1, t_2) + C_2(t_1, t_2) ]dt_2 dt_1 \\
        &= T_1 T_2\|\mathbf{o}-\overline{\mathbf{o}}\|^2_2 + C + \int_{ 0}^{T_1} \int_{0}^{T_2} C_1(\mathbf{o}, \mathbf{\overline{o}}, t_1, t_2) dt_2 dt_1 \,.
    \end{align*}
    where we represent 
    \begin{align*}
        C_1(\mathbf{o}, \mathbf{\overline{o}},t_1,t_2)& =  2(o_1 - \overline{o}_1) E_{\theta, \phi}[t_1 \cos \theta_1 \cos \phi_1 - t_2 \cos \theta_2 \cos \phi_2] \\
        &+ 2(o_2 - \overline{o}_2) E_{\theta, \phi}[t_1 \cos \theta_1 \sin \phi_1 - t_2 \cos \theta_2 \sin \phi_2] + 2(o_3 - \overline{o}_3) E_{\theta, \phi}[t_1 \sin \theta_1 - t_2 \sin \theta_2] \,.
    \end{align*}
    and let $C_2(t_1, t_2)$ include all items that are not related to $\mathbf{o}$ and $\overline{\mathbf{o}}$.
    
    By the symmetry property of $\phi \in U(0,2\pi)$, we know that the $E_{\theta, \phi}[\sin \phi] = E_{\theta, \phi}[\cos \phi] = 0$. Furthermore, by the i.i.d property of $\theta_1$ and $\theta_2$, we have $E_{\theta, \phi}[\sin \theta_1] = E_{\theta, \phi}[\sin \theta_2]$. Observing that the integration over $t_1$ and $t_2$ is also symmetrical, we can deduce that $\int_{ 0}^{T_1} \int_{0}^{T_2}C_1(\mathbf{o}, \mathbf{\overline{o}},t_1,t_2)dt_2 dt_1 = 0$.
    Therefore, we finally get 
    \begin{align*}
        d(R, \overline{R}) =  T_1 T_2\|\mathbf{o}-\overline{\mathbf{o}}\|^2_2 + C \,.
    \end{align*}
    where $C = \int_{ 0}^{T_1} \int_{0}^{T_2} C_2(t_1,t_2) dt_2 dt_1$ represent a constant independent of the camera position $\mathbf{o}$ and $\mathbf{\overline{o}}$.
\end{proof}

\subsection{Proof of Lemma~\ref{lem:greedy}}
\greedy*
\begin{proof}
  Suppose the optimal solution in the primal problem is $R_1, R_2, \ldots, R_N$, the optimal solution obtained by our greedy algorithm is $\Tilde{R}_1, \Tilde{R}_2, \ldots, \Tilde{R}_N$.

We first prove that the optimal value in the $i+1$-th iteration of our method is not larger than the optimal value in the $i$-th iteration. Assume not, $\delta_{i+1} > \delta_i$, then we can find the image $\Tilde{R}_{i+1}$ satisfy $H(\Tilde{R}_{i+1}|R_j) \ge \delta_{i+1}$ for all $j \le i$. Because in the $i$-th iteration we only have the constraints $H(\Tilde{R}|\Tilde{R}_j) \ge \delta_{i}$ for all $j \le i-1$, therefore, we take $\Tilde{R} = \Tilde{R}_{i+1}$ will satisfy this constraints, with the value $\delta_{i+1} > \delta_i$, contradict with the property that $\delta_i$ is the optimal solution in the $i$-th iteration. So the optimal value in the $i+1$-th iteration of our method is not larger than the optimal value in the $i$-th iteration. 

Then we can assume the optimal value we find in each iteration as $\delta_1 \ge \delta_2 \ge \ldots \ge \delta_N$. So we have $\Tilde{\delta} = \min\{\delta_1, \delta_2, \ldots, \delta_N\} = \delta_N$.

\textbf{Then we prove the conclusion by contradiction.} Suppose we have $\Tilde{\delta} < \frac{1}{2}\delta$. Assume we have $n$ common images of the solution of the primal problem and the solution obtained by our greedy algorithm. By the solution $\Tilde{\delta} < \frac{1}{2}\delta$ we know that $n<=N-1$. So there are $N-n$ images in the primal solution that do not appear in our solution. Suppose the different images of primal solution are $R_{i_1}, R_{i_2}, \ldots, R_{i_{N-n}}$ and the different images in our solution are $\Tilde{R}_{i_1}, \Tilde{R}_{i_2}, \ldots, \Tilde{R}_{i_{N-n}}$. Then we consider the optimization problem in the iteration that we choose the last different image. Then we consider the last iteration of our algorithm:
\begin{align*}
        \max_{R \in \mathcal{R}} \delta_N
        \;\;\textrm{s.t.} \; H(R|\Tilde{R}_j)\ge\delta_N,\forall 1\le j\le N-1  \,.
\end{align*}

Then for the different images, $R_{i_1}, R_{i_2}, \ldots, R_{i_{N-n}}$ appear in primal solution but do not appear in our solution, we have that taking these images in the solution will incur a smaller solution. That is, for each $R$ in $R_{i_1}, R_{i_2}, \ldots, R_{i_{N-n}}$, we have a corresponding image $\Tilde{R}$ in $\Tilde{R}_1, \ldots, \Tilde{R}_{N-1}$, incur the relative difference $H(R|\Tilde{R}) \le \delta_N = \Tilde{\delta}$. By the definition of $\delta$, we know that $\Tilde{R}$ can only choose in the difference set $\Tilde{R}_{i_1}, \Tilde{R}_{i_2}, \ldots, \Tilde{R}_{i_{N-n}}$. Then we consider two cases:
\begin{itemize}
    \item Case 1. \textbf{The optimal solution of the last iteration $\Tilde{R}_N$ is not in the set of common image.} In this case, Because we have not selected it in the first $n-1$ iterations, we only have $N-n-1$ images to choose for the corresponding images selected in our algorithm which satisfy $H(R|\Tilde{R}) \le \delta_N = \Tilde{\delta}$. However, we have $R_{i_1}, R_{i_2}, \ldots, R_{i_{N-n}}$ in optimal solution of primal set, there are $N-n$ images satisfy this inequality. Therefore, by the Pigeonhole Principle, there exists two images in $R_{i_1}, R_{i_2}, \ldots, R_{i_{N-n}}$ corresponding to the same image $\Tilde{R_{i_k}}$ in $\Tilde{R}_{i_1}, \Tilde{R}_{i_2}, \ldots, \Tilde{R}_{i_{N-n}}$ that incur $H(R|\Tilde{R}) \le \Tilde{\delta}$. By Assumption~\ref{ass:inverse} we know that $H(R| \overline{R}) \propto d(R, \overline{R})$. By the definition of $d(R, \overline{R}) = E_{\mathbf{r} \in R, \overline{\mathbf{r}}  \in \overline{R}} \left [\int_{ 0}^{T_1} \int_{0}^{T_2} \|\mathbf{r}(t_1)-\mathbf{\overline{r}}(t_2)\|^2_2 dt_2 dt_1 \right]$ we know that it satisfy the triangle inequality: $d(R_{i_1},R_{i_2}) \le d(R_{i_1},\Tilde{R}_{i_k}) + d(R_{i_2},\Tilde{R}_{i_k})$. Therefore we can get the triangle inequality of $H$:
\begin{align*}
    H(R_{i_1}|R_{i_2}) \le H(R_{i_1}|\Tilde{R}_{i_k}) + H(R_{i_2}|\Tilde{R}_{i_k}) < \frac{\delta}{2} + \frac{\delta}{2} = \delta \,.
\end{align*}
This is contradictory to the fact that these two images are in the solution of the primal problem with distance $H(R_{i_1}|R_{i_2}) \ge \delta$.

\item Case 2. \textbf{The optimal solution of the last iteration $\Tilde{R}_N$ is in the set of common image.} In this case, we have $N-n$ images to choose for the corresponding images selected in our algorithm which satisfy $H(R|\Tilde{R}) \le \delta_N = \Tilde{\delta}$. Note that we have $R_{i_1}, R_{i_2}, \ldots, R_{i_{N-n}}$ in optimal solution of primal set, there are $N-n$ images satisfy this inequality. If there are two images in the primal set corresponding to the same image selected by our algorithm, using the analysis of case 1 will get a contradiction. Therefore, we only need to consider the case they are all corresponding to different images in our set, that is, each image $\Tilde{R}_{i_k}$ in our set has a unique corresponding image $R_{i_l}$ in the primal set. However, note that the last iteration solution $\Tilde{R}_N$ is in the set of common images and it also satisfies the constraint, that is, it also corresponds to an image $R$, satisfy the inequality $H(\Tilde{R}_N| R) = \delta_N < \frac{\delta}{2}$. By the definition of $\delta$, we know that $R$ must be in the different sets in our solution, not the common set. But we have proved that each image in $\Tilde{R}_{i_1}, \Tilde{R}_{i_2}, \ldots, \Tilde{R}_{i_{N-n}}$ corresponds to an image in primal set satisfies the inequality. Suppose $H(R|R_{i_k}) < \frac{\delta}{2}$.  By Assumption~\ref{ass:inverse} we know that the relative difference is proportional to the distance metric so it also satisfies the triangle inequality. So we have:
\begin{align*}
    H(\Tilde{R}_N|R_{i_k}) \le H(\Tilde{R}_N|R) + H(R|R_{i_k}) < \frac{\delta}{2} + \frac{\delta}{2} = \delta \,.
\end{align*}
This is contradictory to the fact that these two images $\Tilde{R}_N$ and $R_{i_k}$ are in the solution of the primal problem with distance $H(\Tilde{R}_N|R_{i_k}) \ge \delta$.
\end{itemize}

Therefore, we have proved this lemma by contradiction and show that $\Tilde{\delta} \ge \frac{1}{2} \delta$.

\end{proof}

\subsection{Proof of Lemma~\ref{lem:rgb}}

\rgb*

\begin{proof}
    By the definition of $\hat{C}(\mathbf{r})$, we have
    \begin{align*}
        \|\hat{C}(\mathbf{r})- \hat{C}(\overline{\mathbf{r}})\| &\le \sum^N_{i=1}\|T_i \alpha_i \mathbf{c}_i - \overline{T}_i \overline{\alpha}_i \overline{\mathbf{c}}_i\|\,.
    \end{align*}
    Then we analysis $|T_i - \overline{T}_i|$, $|\alpha_i - \overline{\alpha}_i|$, $\|\mathbf{c}_i - \overline{\mathbf{c}}_i\|$ separately.
    We have 
    \begin{align*}
        |T_i - \overline{T}_i| &= |\exp(-\sum^{i-1}_{j=1} \sigma_j \delta_j) - \exp(-\sum^{i-1}_{j=1} \overline{\sigma}_j \delta_j)| \\
        &= |\exp(-\sum^{i-1}_{j=1} \sigma(\mathbf{o} + t_j\mathbf{d}) \delta_j) - \exp(-\sum^{i-1}_{j=1} \sigma(\mathbf{\overline{o}} + t_j\mathbf{\overline{d}})\delta_j)| \\
        & \le |-\sum^{i-1}_{j=1} \sigma(\mathbf{o} + t_j\mathbf{d}) \delta_j +\sum^{i-1}_{j=1} \sigma(\mathbf{\overline{o}} + t_j\mathbf{\overline{d}})\delta_j| \\
        &\le \sum^{i-1}_{j=1} |\sigma(\mathbf{o} + t_j\mathbf{d}) - \sigma(\mathbf{\overline{o}} + t_j\mathbf{\overline{d}})| \delta_j \\
        &\le \sum^{i-1}_{j=1} L\|\mathbf{o} + t_j\mathbf{d} - \mathbf{\overline{o}} + t_j\mathbf{\overline{d}}\|\delta_j\\
        &\le (\sum^{i-1}_{j=1} \delta_j L) \|\mathbf{o} - \mathbf{\overline{o}}\| + (\sum^{i-1}_{j=1} \delta_j t_j L) \|\mathbf{d} - \mathbf{\overline{d}}\| \\
        &= t_i L\|\mathbf{o} - \mathbf{\overline{o}}\| + (\sum^{i-1}_{j=1} \delta_j t_j L) \|\mathbf{d} - \mathbf{\overline{d}}\| \,.
    \end{align*}
    where the first inequality is because $|e^{-x} - e^{-y}| \le |x-y|$, the second inequality is by the lipschitz property of $\sigma$, the final equality is because $\delta_i = t_{i+1} - t_i$ and $t_1 = 0$.
    We also have 
    \begin{align*}
        |\alpha_i - \overline{\alpha}_i| &= |\exp(- \overline{\sigma}_i \delta_i) - \exp(- \sigma_i \delta_i)| \\
        &\le |\sigma(\mathbf{o} + t_i\mathbf{d}) - \sigma(\mathbf{\overline{o}} + t_i\mathbf{\overline{d}})| \delta_i \\
        &\le L\|\mathbf{o} + t_i\mathbf{d} - \mathbf{\overline{o}} + t_i\mathbf{\overline{d}}\|\delta_i \\
        &\le \delta_i L \|\mathbf{o} - \overline{\mathbf{o}}\| + t_i \delta_i L \|\mathbf{d} - \overline{\mathbf{d}}\| \,.
    \end{align*}
    Finally, we have 
    \begin{align*}
        \|\mathbf{c}_i - \overline{\mathbf{c}}_i\| &= \|\mathbf{c}(\mathbf{o} + t_i\mathbf{d}, \mathbf{d}) - \mathbf{c}(\mathbf{\overline{o}} + t_i\mathbf{\overline{d}}, \overline{\mathbf{d}})\| \\
        &\le \|\mathbf{c}(\mathbf{o} + t_i\mathbf{d}, \mathbf{d}) - \mathbf{c}(\mathbf{\overline{o}} + t_i\mathbf{\overline{d}}, \mathbf{d})\| + \|\mathbf{c}(\mathbf{\overline{o}} + t_i\mathbf{\overline{d}}, \mathbf{d}) - \mathbf{c}(\mathbf{\overline{o}} + t_i\mathbf{\overline{d}}, \overline{\mathbf{d}})\|  \\
        &\le L\|\mathbf{o}-\overline{\mathbf{o}} + t_i(\mathbf{d}-\overline{\mathbf{d}})\| + L\|\mathbf{d} - \overline{\mathbf{d}}\| \\
        &\le L\|\mathbf{o}-\overline{\mathbf{o}}\| + L(t_i + 1)\|\mathbf{d} - \overline{\mathbf{d}}\|  \,.
    \end{align*}
    By the expression of $T_i$ and $\alpha_i$, we know $T_i \le 1$, $\alpha_i \le \delta_i$. As $c_i$ represents the RGB color, the norm of $c_i$ is also bounded by 1. Furthermore, the difference of viewing direction $\|d - \overline{d}\|$ is bounded. Therefore, we finally have the following upper bound:
    \begin{align*}
        \|\hat{C}(\mathbf{r})- \hat{C}(\overline{\mathbf{r}})\| &\le \sum^N_{i=1}\|T_i \alpha_i \mathbf{c}_i - \overline{T}_i \overline{\alpha}_i \overline{\mathbf{c}}_i\|\\
        &\le \sum^N_{i=1}|T_i -\overline{T}_i||\alpha_i|\|\mathbf{c}_i\| + |\overline{T}_i||\alpha_i-\overline{\alpha}_i|\|\mathbf{c}_i\| + |\overline{T}_i \overline{\alpha}_i| \|\mathbf{c}_i - \mathbf{\overline{c}_i}\| \\
        &\le \sum^N_{i=1}(t_i \delta_i L + 2\delta_i L) \|\mathbf{o}-\mathbf{\overline{o}}\| +  C \\
        &\le 3(\sum^N_{i=1}\delta_i)L \|\mathbf{o}- \overline{\mathbf{o}}\| + C \\
        &= 3L\|\mathbf{o} - \mathbf{\overline{o}}\| + C \,.
    \end{align*}
    The first inequality is by definition, the second inequality is by triangle inequality, the third inequality is by the conclusion we have proved and the bounding property of $T_i$, $\alpha_i$ and $\mathbf{c}_i$, the final inequality is by $t_i\le 1$ and the final equality is by $\delta_i = t_{i+1} - t_i$ and $t_1 = 0, t_{N+1} = 1$. $C$ represents a bounding constant of $\|\mathbf{d} - \mathbf{\overline{d}}\|$, independent of $\|\mathbf{o} - \mathbf{\overline{o}}\|$.
\end{proof}

\section{Experiment Details}

\subsection{Sparse View Sampling}

\begin{table}[ht]
\centering
\begin{tabular}{c|cccccccc|c}
\toprule


\textbf{PSNR$\uparrow$}
&\textbf{hotdog}
&\textbf{lego}
&\textbf{chair}
&\textbf{drums}
&\textbf{ficus}
&\textbf{materials}
&\textbf{mic}
&\textbf{ship}
&\textbf{Avg.}
\\

\hline
NeRF + Rand &
22.19 &
19.85 &
19.99 &
10.93 &
18.13 &
8.73 &
17.85 &
15.31 &
16.62
\\
NeRF + FVS &
23.87 &
17.83 &
20.06 &
15.38 &
17.91 &
13.76 &
17.91 &
15.94 &
17.83 

\\
ActiveNeRF &
17.87 &
18.96 &
20.20 &
14.82 &
\textbf{22.55} &
\textbf{18.19} &
17.92 &
\textbf{19.34} &
18.73 

\\
Ours~(S$\rightarrow$P) &
\textbf{24.01} &
20.48 &
\textbf{26.21} &
16.78 &
18.49 &
13.95 &
17.57 &
13.95 &
18.93 
\\
Ours~(P$\rightarrow$S) &
23.14 &
\textbf{22.90} &
20.08 &
\textbf{17.96} &
20.99 &
15.16 &
\textbf{24.01} &
16.50 &
\textbf{20.09} \\
\hline
\hline
\textbf{SSIM$\uparrow$}
&\textbf{hotdog}
&\textbf{lego}
&\textbf{chair}
&\textbf{drums}
&\textbf{ficus}
&\textbf{materials}
&\textbf{mic}
&\textbf{ship}
&\textbf{Avg.}
\\

\hline
NeRF + Rand &
0.919 &
0.838 &
0.848 &
0.793 &
0.845 &
0.762 &
0.881 &
0.689 &
0.822

\\
NeRF + FVS &
\textbf{0.922} &
0.798 &
0.853 &
0.776 &
0.838 &
0.776 &
0.879 &
0.706 &
0.819

\\
ActiveNeRF &
0.860 &
0.829 &
0.858 &
0.768 &
\textbf{0.886} &
\textbf{0.813} &
0.876 &
0.716 &
0.826  

\\
Ours~(S$\rightarrow$P) &
0.918 &
\textbf{0.852} &
\textbf{0.898} &
0.793 &
0.848 &
0.789 &
0.883 &
\textbf{0.789} &
\textbf{0.846 }

\\
Ours~(P$\rightarrow$S) &
0.916 &
0.851 &
0.849 &
\textbf{0.814} &
0.859 &
0.812 &
\textbf{0.924} &
0.704 &
0.841 
\\

\hline
\hline
\textbf{LPIPS$\downarrow$}
&\textbf{hotdog}
&\textbf{lego}
&\textbf{chair}
&\textbf{drums}
&\textbf{ficus}
&\textbf{materials}
&\textbf{mic}
&\textbf{ship}
&\textbf{Avg.}
\\

\hline
NeRF + Rand &
0.089 &
0.152 &
0.165 &
0.231 &
0.152 &
0.241 &
0.138 &
0.317 &
0.186 
\\
NeRF + FVS &
\textbf{0.082} &
0.197 &
0.158 &
0.239 &
0.167 &
0.205 &
0.140 &
0.304 &
0.186 

\\
ActiveNeRF &
0.172 &
0.150 &
0.149 &
0.253 &
\textbf{0.116} &
\textbf{0.145} &
0.142 &
0.319 &
0.181

\\
Ours~(S$\rightarrow$P) &
0.089 &
\textbf{0.135} &
\textbf{0.109} &
0.218 &
0.152 &
0.177 &
0.139 &
\textbf{0.177} &
\textbf{0.149 }

\\
Ours~(P$\rightarrow$S) &
0.099 &
0.153 &
0.165 &
\textbf{0.183} &
0.136 &
0.159 &
\textbf{0.093} &
0.306 &
0.162 
\\
\bottomrule
\end{tabular}
\caption{\textbf{Quantitative comparison on Blender in Setting \uppercase\expandafter{\romannumeral1}.} We provide a detailed listing of the metric values for each object on Blender, which is the same in Table \ref{tab:active} in the manuscript. } 
\label{tab:detail}
\end{table}

We conduct experiments in Active Learning settings using the ActiveNeRF~\citep{pan2022activenerf} codebase. In traditional NeRF~\citep{mildenhall2020nerf}, we obtain a volume parameter $\sigma$ and color values $c = (r, g, b)$ for a specific position and direction. In ActiveNeRF, it simultaneously outputs both mean and variance, following a Gaussian distribution.  For simplicity, we adopt the ActiveNeRF version and apply its pipeline to our baseline methods \textit{(NeRF+Random}, \textit{NeRF+FVS}) as well as our proposed strategy. The primary modification we make is in the evaluation step, which is central to this active learning setting.

Its original codebase only provides training configuration files for a portion of the LLFF dataset and the Blender dataset. We observe that for the Blender dataset, the codebase used a fixed number (20) of initial training samples so we cannot decide the initial training set size. We then modify it to allow the selection of the initial training set size, with the remaining images serving as a holdout set. For instance, in Setting \uppercase\expandafter{\romannumeral1}, for each object in the Blender dataset with 100 ordered images, we choose the first 4 images as the initial set and use the remaining 96 images as the holdout set. Due to excessive memory requirements, training on the LLFF dataset is not feasible even on a 48GB A40 GPU, so we temporarily refrain from conducting experiments on it.  However, we believe that the results on the Blender dataset sufficiently validate our claims.

Due to the randomness of the strategy and potential variations in the training process, we conducted three experiments for each result and selected the average outcome. In Table ~\ref{tab:detail}, We provide a detailed breakdown of the specific results for each object on Blender in Setting \uppercase\expandafter{\romannumeral1}.

\subsection{Few-shot View Synthesis}

\subsubsection{Dataset}

We conduct our experiments in the few-shot setting across three datasets: the Blender dataset~\citep{mildenhall2020nerf}, the DTU dataset~\citep{jensen2014large}, and the LLFF dataset~\citep{mildenhall2019local}. Many works focus on the few-shot setting using different benchmarks, making it challenging to compare all of them uniformly. To ensure a fair and comprehensive comparison, we adopt the settings from FreeNeRF~\citep{yang2023freenerf}. We conduct the experiments on a 48GB A40 GPU.

\textbf{Blender Dataset:} The Blender dataset~\citep{mildenhall2020nerf} comprises eight synthetic scenes. We follow the data split used in DietNeRF~\citep{jain2021putting} to simulate a few-shot neural rendering scenario. For each scene, the training images with IDs (counting from “0”) 26, 86, 2, 55, 75, 93, 16, 73, and 8 are used as the input views, and 25 images are sampled evenly from the testing images for evaluation.

\textbf{DTU Dataset:} The DTU dataset~\citep{jensen2014large} is a large-scale multiview dataset consisting of 124 different scenes. PixelNeRF~\citep{yu2021pixelnerf} uses a split of 88 training scenes and 15 test scenes to study the pre-training or per-scene fine-tuning setting in a few-shot neural rendering scenario. Unlike FreeNeRF, we do not require pre-training. We follow~\citep{niemeyer2022regnerf} to optimize NeRF models directly on the 15 test scenes. The test scan IDs are 8, 21, 30, 31, 34, 38, 40, 41, 45, 55, 63, 82, 103, 110, and 114. In each scan, the images with the following IDs (counting from “0”) are used as the input views: 25, 22, 28. The images with IDs in [1, 2, 9, 10, 11, 12, 14, 15, 23, 24, 26, 27, 29, 30, 31, 32, 33, 34, 35, 41, 42, 43, 45, 46, 47] serve as the novel views for evaluation.
According to the FreeNeRF, masks of the DTU dataset do not always help improve PSNR and SSIM and sometimes the PSNR score in a specific scene drops a lot. For a fair comparison, we train one model for one scene to produce the results in the object and full-image setting at the same time.

\textbf{LLFF Dataset:} The LLFF dataset~\citep{mildenhall2019local} is a forward-facing dataset containing eight scenes. Adhering to~\citep{mildenhall2020nerf, niemeyer2022regnerf}, we use every 8th image as the novel views for evaluation and evenly sample the input views from the remaining views.

\subsubsection{Experiment Results}
Figures~\ref{fig:dtu_all} and \ref{fig:llff_all} present qualitative results on the DTU and LLFF datasets, respectively, corresponding to the quantitative results in Table~\ref{tab:blender}.

In our experiments, $L_{\text{micro}}$ 
  represents the variance of the mean color value between training images and randomly rendered images, ensuring that the color difference is constrained within a certain range. This is based on Lemma~\ref{lem:rgb}, where we emphasize the color difference between images.$L_{\text{micro}}$  is not limited to this form and can be interpreted using other measures like KL-divergence in color, which can also achieve similar performance.

Similarly, $L_{\text{macro}}$ is not restricted to using CLIP. Other models such as DINO~\cite{caron2021emerging} or BLIP~\cite{li2022blip} can also extract semantic features for our framework.

Our framework is flexible and can incorporate various forms of regularization terms related to semantic space distance or pixel space distance, allowing for broad applicability and adaptability.

\begin{figure}[h]
    \centering
    \includegraphics[width=1\linewidth]{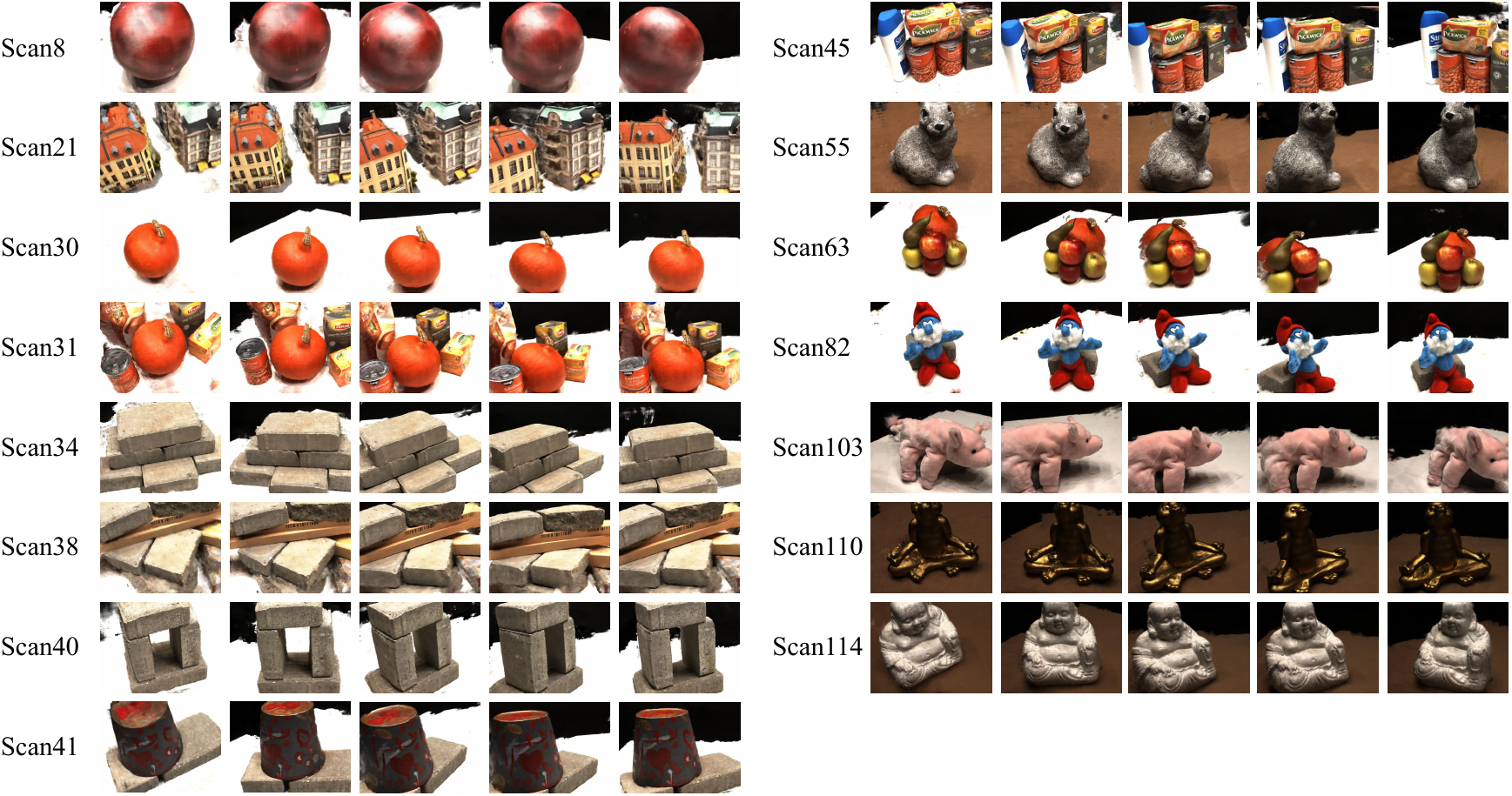}

    \caption{\textbf{Example of our results with 3 input views on the DTU dataset.}}
    \label{fig:dtu_all}
\end{figure}

\begin{figure}[h]
    \centering
    \includegraphics[width=1\linewidth]{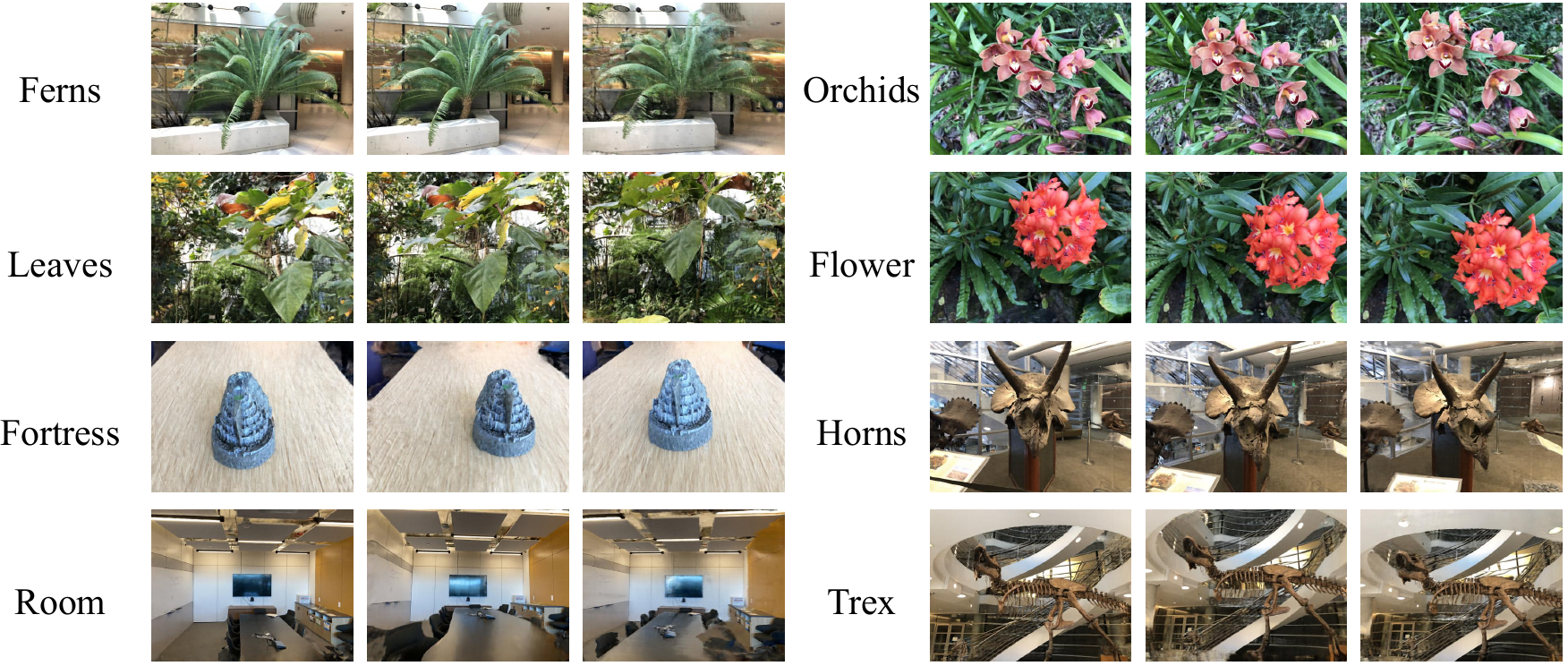}

    \caption{\textbf{Example of our results with 3 input views on the LLFF dataset.}}
    \label{fig:llff_all}
\end{figure}

\subsubsection{Limitations on baselines}
FreeNeRF is a strong baseline that achieves state-of-the-art performance compared to methods using priors from diffusion models across many datasets. We get this conclusion from the experiment results of ReconFusion~\citep{wu2023reconfusion}.
Therefore, 
it is worthwhile to continue our comparison between our method and some diffusion-based methods like SparseFusion~\citep{zhou2022sparsefusion} or ReconFusion~\citep{wu2023reconfusion}. 

SparseFusion's evaluation is currently limited to the CO3D dataset~\citep{reizenstein2021common}, and it lacks performance data on three popular and classical datasets which we have used to keep the same as FreeNeRF: the Blender dataset, the DTU dataset and the LLFF dataset. Fair evaluations of SparseFusion on these datasets are absent, and addressing this gap would require significant additional time, which might divert from our primary research focus. Nonetheless, the datasets we employ are robust and widely accepted in NeRF research, providing sufficient support for our experiments with numerous baseline performances available for reference.

Additionally, the lack of open-source code for ReconFusion limits our ability to apply custom regularization terms or conduct meaningful comparisons.  Future work should aim to incorporate more new baseline methods and explore additional variations within our framework.

\end{document}